\documentclass{article}

\PassOptionsToPackage{numbers,compress}{natbib}
\usepackage[preprint]{neurips_2026}

\usepackage[utf8]{inputenc}
\usepackage[T1]{fontenc}
\usepackage{hyperref}
\usepackage{url}
\usepackage{booktabs}

\usepackage{amsmath}
\usepackage{amsfonts}
\usepackage{nicefrac}
\usepackage{microtype}
\usepackage{xcolor}
\usepackage{graphicx}
\usepackage[most]{tcolorbox}

\usepackage{amsthm}
\newtheorem{assumption}{Assumption}
\newtheorem{proposition}{Proposition}
\newtheorem{lemma}{Lemma}
\newtheorem{corollary}{Corollary}

\title{Large Language Models as Amortized Pareto-Front Generators for Constrained Bi-Objective Convex Optimization}
\newcommand{\AuthorBlock}[2]{%
  \begin{minipage}[t][4.3em][t]{0.47\textwidth}
    \centering
    \textbf{#1}\\[-0.1em]
    {\normalfont #2}
  \end{minipage}%
}

\newcommand{\AuthorRow}[4]{%
  \makebox[\textwidth][c]{%
    \AuthorBlock{#1}{#2}\hfill
    \AuthorBlock{#3}{#4}%
  }%
}

\newcommand{\AuthorSingle}[2]{%
  \makebox[\textwidth][c]{%
    \begin{minipage}[t][3.8em][t]{0.47\textwidth}
      \centering
      \textbf{#1}\\[-0.1em]
      {\normalfont #2}
    \end{minipage}%
  }%
}

\author{%
  \AuthorRow
    {Peipei Xu\protect\footnotemark[2]}
    {University of Shanghai for Science\\and Technology}
    {SiYuan Ma}
    {Nanyang Technological University}\\[1.45em]
  \AuthorRow
    {Yaohua Liu}
    {Guangdong Institute of Intelligence Science\\and Technology}
    {Yu Wu}
    {Georgia Institute of Technology}\\[1.45em]
  \AuthorRow
    {Guanliang Liu}
    {The University of Michigan}
    {Yang Zhang}
    {The Hong Kong University of Science\\and Technology}\\[1.45em]
  \AuthorSingle
    {Yong Liu\protect\footnotemark[1]}
    {University of Shanghai for Science\\and Technology}
}

\begin{document}

\maketitle
\begingroup
\renewcommand{\thefootnote}{\fnsymbol{footnote}}
\footnotetext[1]{Corresponding author. Email: \texttt{liuyong.seu@163.com}.}
\footnotetext[2]{Email: \texttt{2335051325@st.usst.edu.cn}.}

\endgroup

\begin{abstract}
Generating feasible Pareto fronts for constrained bi-objective continuous optimization is central to multi-criteria decision-making in engineering design, resource allocation, and operations management. Existing approaches typically rely on iterative scalarization, evolutionary search, or problem-specific numerical solvers, requiring repeated optimization and substantial algorithmic design for each problem instance. Although large language models (LLMs) have recently shown promise in optimization, most existing uses rely on intermediate procedures such as code generation, heuristic proposal, or external solver invocation, and are often tailored to discrete combinatorial settings. This paper introduces DIPS, an end-to-end framework that fine-tunes LLMs as amortized Pareto-front generators for constrained bi-objective convex optimization. Given a problem description, DIPS directly outputs an ordered set of feasible continuous decision vectors approximating the Pareto front. To bridge language modeling and continuous optimization, DIPS combines a discretization scheme for continuous outputs, Numerically Grounded Token Initialization (NGTI) for newly introduced numerical tokens, and Three-Phase Curriculum Optimization (TPCO), which progressively aligns structural validity, numerical feasibility, and Pareto-front quality. Across five families of constrained bi-objective convex problems, fine-tuning a 7B-parameter LLM with DIPS achieves normalized hypervolume ratios of 95.29\%--98.18\% relative to reference fronts. With vLLM-accelerated inference, DIPS solves a single instance in as little as 0.16 seconds and outperforms general-purpose and reasoning LLM baselines, including GPT-5 and DeepSeek-R1, under the evaluated setting. These results show that LLMs can serve as effective amortized generators for continuous Pareto-front approximation, suggesting a language-driven alternative to repeated solver invocation for the studied class of optimization problems.
\end{abstract}

\section{Introduction}

Large language models (LLMs) are increasingly being explored beyond language processing, including in mathematical reasoning, scientific computing, and optimization. In optimization, most existing work uses LLMs as \emph{assistive} components: the model proposes prompts or heuristics, writes code, translates natural-language specifications into solver-ready programs, or repeatedly queries an external optimizer~\citep{yang2024opro,liu2024evolution,Ye2024ReEvoLL,ahmaditeshnizi2024optimus,zhang2024solving,astorga2025autoformulation,lv2025gso,yang2025optibench}. Recent surveys summarize this rapidly growing literature~\citep{liu2026systematic,da2025large}. A more ambitious alternative is to use an LLM as an \emph{end-to-end amortized solver} that directly maps a problem description to a solution. Such a formulation could provide a unified optimization interface without requiring users to select algorithms or invoke specialized software.

We study this possibility for \emph{constrained bi-objective convex optimization}. Unlike single-objective optimization, the target is not one optimum but a Pareto frontier that captures the trade-off between competing objectives~\citep{deb2001multiobjective}. This setting is challenging for autoregressive models because the output is a \emph{set} of continuous decision vectors: each vector must be numerically accurate and feasible, while the frontier as a whole must preserve global trade-off geometry. Existing end-to-end LLM optimization work has largely focused on single-solution combinatorial problems, where outputs are discrete and feasibility is easier to express token by token~\citep{jiang2025llm_end2end_co}. These formulations do not directly transfer to continuous Pareto-front generation.

Our experiments show that naive supervised fine-tuning fails in this regime: directly generating raw decimal frontier outputs is poorly learnable, newly introduced numerical tokens are difficult to adapt, and token-level cross-entropy does not reflect numerical proximity, while imposing numerical supervision from the beginning destabilizes training. The main challenge is therefore not only optimization difficulty, but also \emph{how the problem is represented and learned as a language modeling task}.

To address this, we propose \textbf{DIPS}, a unified framework that adapts pretrained LLMs into amortized Pareto-front generators. DIPS combines three ingredients. First, a \emph{discretization scheme} encodes each scalar as a compact two-token numerical representation, reducing sequence length and regularizing the output structure. Second, \emph{Numerically Grounded Token Initialization (NGTI)} warm-starts the newly added numerical tokens by reusing the pretrained model's existing numerical embedding structure. Third, \emph{Three-Phase Curriculum Optimization (TPCO)} starts with structural learning and progressively introduces coarse and fine numerical supervision. At inference time, a lightweight multi-pass fusion procedure further improves frontier coverage. Across five families of randomized constrained bi-objective convex problems, DIPS achieves strong hypervolume and IGD$^+$, maintains near-perfect feasibility, generalizes to unseen dimensions and constraints, and outperforms substantially larger untuned general-purpose LLMs as well as strong classical baselines.

Our contributions are threefold: \textbf{(1)} We formulate end-to-end Pareto-front generation for constrained bi-objective convex optimization as an autoregressive language modeling problem, casting an LLM as an amortized solver that maps problem descriptions to ordered sets of feasible decision vectors. \textbf{(2)} We propose DIPS, which combines a discretization scheme, Numerically Grounded Token Initialization, and Three-Phase Curriculum Optimization to make this setting learnable for pretrained LLMs. \textbf{(3)} DIPS substantially outperforms LLM-based baselines and is competitive with strong classical scalarization methods under a matched wall-clock budget.

\section{Related Work}

\textbf{Bi-objective Continuous Optimization.}
Bi-objective continuous optimization seeks a Pareto frontier in a continuous feasible region under box and linear constraints. Classical approaches include scalarization, normal-boundary or normal-constraint constructions, decomposition, Bayesian optimization, and evolutionary algorithms~\citep{das1998normal,deb2002fast,zhang2007moead,messac2003normal,daulton2023hvkg,wang2024hvi_distribution}. Learning-based approaches instead approximate solution maps, Pareto sets, or distributions over high-quality solutions to amortize repeated optimization cost~\citep{lin2022pareto,zhang2023hypervolume_geom,tuan2024framework,xue2024offline_moo,yuan2025paretoflow,shrestha2026pcd,annadani2025pgdmoo,hotegni2026spread}. We instead investigate whether a pretrained LLM can directly generate a feasible Pareto frontier as a text sequence.

\textbf{LLMs for Optimization.}
Recent LLM-based optimization methods fall into two categories. The first uses LLMs indirectly, e.g., for code generation, heuristic design, or translating problem descriptions into solver-ready programs~\citep{liu2024evolution,Ye2024ReEvoLL,ahmaditeshnizi2024optimus,zhang2024solving,astorga2025autoformulation,lv2025gso,yang2025optibench,chen2022towards}. The second studies end-to-end generation, primarily for single-objective combinatorial problems with discrete outputs~\citep{jiang2025llm_end2end_co}. Our setting differs substantially: the target is a structured set of feasible continuous solutions that must recover the geometry of a Pareto frontier.

\textbf{LLMs for Numerical Generation.}
A growing body of work studies the numerical limitations of language models and improves them via tokenization, embedding initialization, or modified training objectives~\citep{singh2024tokenization_counts,golkar2023xval,zhou2025fone,schwartz2024numerologic,jin2026geonum,yang2025number_cookbook,feher2025dynamic_tokenization,schmidt2024tokenization,piskorz2026numerical_distributions,kreitner2025bit_tokens,li2025numericbench,dugan2024occamllm}. We build on this line of work, but focus on a setting where numerical precision, feasibility, and set-level structure must be learned jointly for Pareto-front generation.

\section{Problem Statement}

We study end-to-end Pareto-front generation with LLMs for a family of constrained bi-objective continuous optimization problems. Let $\mathcal{P}$ denote the problem family, and let each instance $p \in \mathcal{P}$ be defined as $p = \big(n, f_{p,1}, f_{p,2}, \ell_p, u_p, A_p, b_p\big)$, where $f_{p,1},f_{p,2}:\mathbb{R}^n \rightarrow \mathbb{R}$ are two objectives to be minimized over the feasible set
\begin{equation}
\mathcal{X}_p = \big\{x \in \mathbb{R}^n \mid \ell_p \le x \le u_p,\; A_p x \le b_p \big\}.
\end{equation}
In our experiments, $(f_{p,1}, f_{p,2})$ is drawn from five problem families: general bi-objective quadratic programs, separable quadratic programs, ridge-type least-squares objectives, Huber regression objectives, and softplus-based smooth convex objectives.

A feasible point $x^\star \in \mathcal{X}_p$ is \emph{Pareto optimal} if no other feasible $x$ satisfies $f_{p,1}(x) \le f_{p,1}(x^\star)$ and $f_{p,2}(x) \le f_{p,2}(x^\star)$ with at least one strict inequality. For each instance $p$, we represent the target frontier as an ordered set of $K$ Pareto solutions $\mathcal{S}_p^\star = \{x_p^{(1)},\ldots,x_p^{(K)}\}$ obtained by an external solver at $K$ sampled trade-off points; we take $K=20$.

Let $\pi_\theta : \mathcal{T} \rightarrow \mathcal{T}$ denote an LLM, $\phi : \mathcal{P} \rightarrow \mathcal{T}$ the textualization of an instance, and $\psi_p : \mathcal{T} \rightarrow \mathbb{R}^{K \times n}$ the parser that recovers $K$ candidate vectors. End-to-end frontier generation is then $\hat{\mathcal{S}}_p = \psi_p\big(\pi_\theta(\phi(p))\big)$, where each predicted $\hat{x} \in \hat{\mathcal{S}}_p$ is feasible if $\ell_p \le \hat{x} \le u_p$ and $A_p \hat{x} \le b_p$.

We evaluate generated frontiers along three axes: \emph{feasibility}, \emph{dominated volume recovery}, and \emph{reference-coverage error}. Throughout, $\mathrm{PF}(\cdot)$ denotes the non-dominated subset of a finite set of objective vectors, and $\hat{\mathcal{S}}_p^{\mathrm{feas}} \subseteq \hat{\mathcal{S}}_p$ the subset of feasible predictions.

\paragraph{Feasibility rate.}
Because solutions come from rounded discretized tokens, feasibility uses a fixed tolerance $\tau$: $\hat{x}\in\hat{\mathcal{S}}_p$ is feasible iff $\ell_p-\tau \le \hat{x} \le u_p+\tau$ and $A_p\hat{x} \le b_p+\tau$ component-wise. Let $\mathcal{X}_p^{\tau}$ be the resulting feasible set. The point-wise feasibility rate on a test set $\mathcal{P}_s \subset \mathcal{P}$ is
\begin{equation}
M_f(\mathcal{P}_s) =
\frac{1}{|\mathcal{P}_s|}
\sum_{p \in \mathcal{P}_s}
\frac{1}{|\hat{\mathcal{S}}_p|}
\sum_{\hat{x} \in \hat{\mathcal{S}}_p}
\mathbf{1}\!\left[\hat{x} \in \mathcal{X}_p^{\tau}\right].
\label{eq:feasibility_rate}
\end{equation}
We use $\tau=5\times 10^{-5}$, consistent with the four-decimal precision of the discretization scheme.

\paragraph{Hypervolume ratio.}
Both the predicted and reference frontiers are mapped to a normalized objective space defined by the reference frontier alone, so that the prediction cannot move the comparison scale. Let $\hat{\mathcal{F}}_p^{\mathrm{feas}}=\{(f_{p,1}(\hat{x}), f_{p,2}(\hat{x})) \mid \hat{x} \in \hat{\mathcal{S}}_p^{\mathrm{feas}}\}$ and $\mathcal{F}_p^\star=\{(f_{p,1}(x),f_{p,2}(x)) \mid x \in \mathcal{S}_p^\star\}$ denote the feasible predicted and reference objective sets. With $z_p^{\mathrm{ideal}}, z_p^{\mathrm{nadir}}$ the component-wise min/max of $\mathrm{PF}(\mathcal{F}_p^\star)$, the reference-based normalization is $\eta_p(y) = (y - z_p^{\mathrm{ideal}}) \oslash (z_p^{\mathrm{nadir}} - z_p^{\mathrm{ideal}})$ ($\oslash$ component-wise division). With a fixed reference point $r=(1.1,1.1)$ in normalized space, the hypervolume ratio is
\begin{equation}
\mathrm{HVR}(p) =
\frac{
\mathrm{HV}\!\left(\eta_p\!\left(\mathrm{PF}(\hat{\mathcal{F}}_p^{\mathrm{feas}})\right);\, r\right)
}{
\mathrm{HV}\!\left(\eta_p\!\left(\mathrm{PF}(\mathcal{F}_p^\star)\right);\, r\right)
},
\label{eq:hvr}
\end{equation}
with $\mathrm{HVR}(p)=0$ whenever no feasible prediction exists.

\paragraph{IGD$^+$.}
IGD$^+$ measures how well the feasible prediction set covers the reference frontier under the same normalization:
\begin{equation}
\mathrm{IGD}^{+}(p) =
\frac{1}{\big|\mathrm{PF}(\mathcal{F}_p^\star)\big|}
\sum_{y \in \mathrm{PF}(\mathcal{F}_p^\star)}
\min_{\hat{y} \in \hat{\mathcal{F}}_p^{\mathrm{feas}}}
\left\|
\max\!\big(\eta_p(\hat{y}) - \eta_p(y),\, 0\big)
\right\|_2.
\label{eq:igdplus}
\end{equation}
The minimum is taken over the full feasible prediction set $\hat{\mathcal{F}}_p^{\mathrm{feas}}$ rather than only its non-dominated subset, because denser predictions yield more faithful coverage estimates; if $\hat{\mathcal{F}}_p^{\mathrm{feas}}=\emptyset$, $\mathrm{IGD}^{+}(p)$ is recorded as $\infty$ and excluded from averaging.

\section{Methodology}
\label{sec:method}

\begin{figure}[t]
\centering
\includegraphics[width=\linewidth]{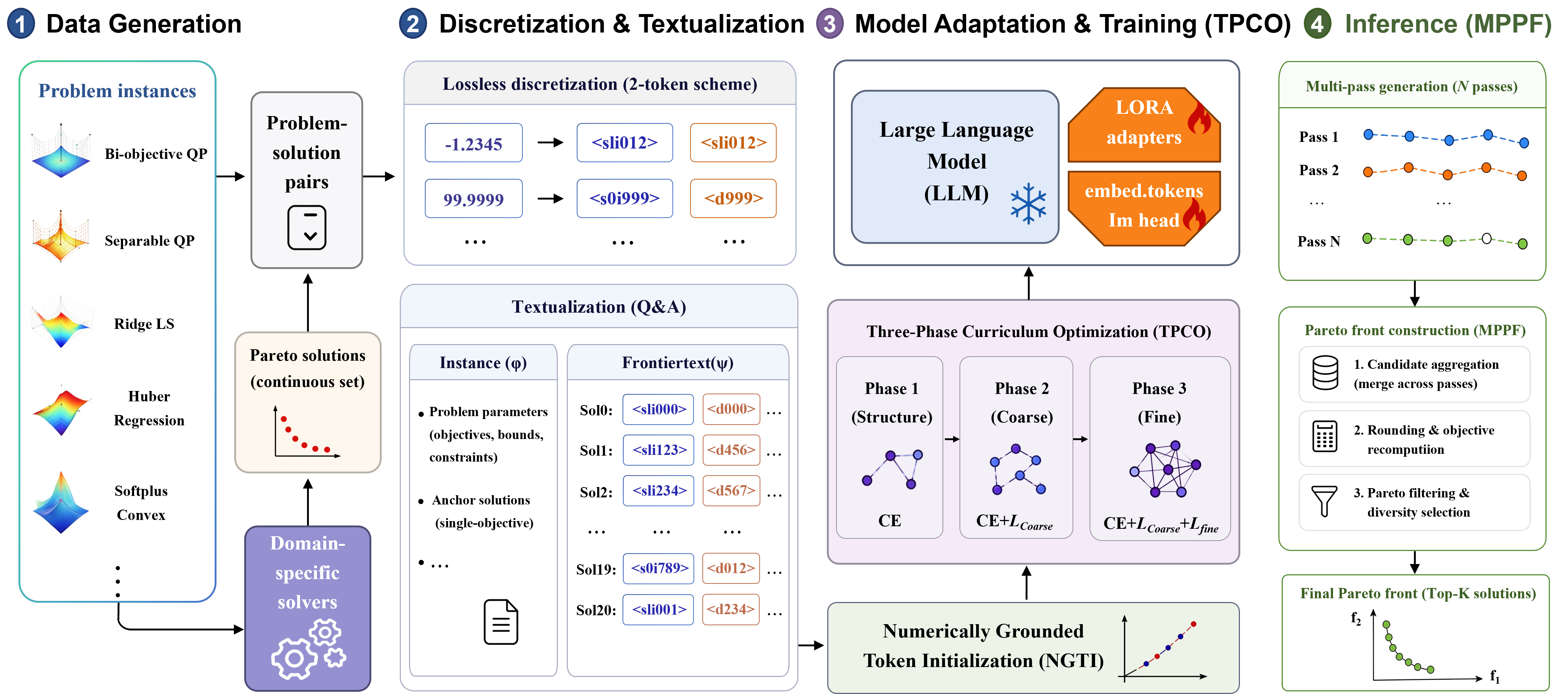}
\caption{Overview of the DIPS framework. Continuous Pareto-front targets are converted into compact two-token sequences via a discretization scheme; newly added numerical tokens are warm-started by Numerically Grounded Token Initialization (NGTI); the LoRA-adapted LLM is then trained under the Three-Phase Curriculum Optimization (TPCO) objective; at inference time, multiple decoding passes are merged into the final frontier by Multi-Pass Pareto Fusion (MPPF).}
\label{fig:framework}
\end{figure}

We propose \textbf{DIPS}, an end-to-end framework that adapts pretrained LLMs into amortized Pareto-front generators. As illustrated in Figure~\ref{fig:framework}, DIPS comprises five parts: (1) constructing problem--frontier supervision pairs, (2) discretizing continuous outputs into compact numerical tokens, (3) initializing the new numerical vocabulary with pretrained numerical structure, (4) training under a coarse-to-fine curriculum that separates structural and numerical learning, and (5) improving test-time frontier coverage via multi-pass fusion.

\subsection{Data Generation}

We construct training pairs by generating instances from five families of constrained bi-objective convex problems: general bi-objective quadratic programs, separable quadratic programs, ridge-type least-squares objectives, Huber regression objectives, and softplus-based smooth convex objectives. They share a common input format consisting of objective parameters, box bounds, and optional linear constraints.

For each family, we sample feasible instances from randomized parameters and solve them offline with family-specific optimizers via an $\epsilon$-constraint procedure. The resulting frontier is filtered and resampled into a fixed ordered set of $K=20$ Pareto solutions, following common practice for training Pareto-set learners against standardized frontier representations~\citep{lin2022pareto,zhang2023hypervolume_geom}. Each instance is then converted into text: the prompt contains the problem parameters together with two single-objective anchor solutions providing compact cues about the two ends of the trade-off surface, and the target output is the ordered Pareto sequence. Full instance-generation, solver, and template details are deferred to Appendix~\ref{app:instance_generation} and Appendix~\ref{app:prompt_template}.

\subsection{Discretization}

Autoregressive LLMs are poorly matched to raw decimal outputs: character-level decimals are long, irregular, and conflate numerical precision with sequence formatting. We therefore replace raw decimal serialization with a two-token encoding tailored to four-decimal continuous values.

For a scalar $v$ rounded to four decimal places, we encode
\begin{equation}
v \mapsto \texttt{<sXiXXX><dXXX>},
\end{equation}
where the first token stores the sign, integer part, and first decimal digit, and the second token stores the remaining three decimal digits. For example,
\[
99.9999 \mapsto \texttt{<s0i999><d999>}, \qquad
-1.2345 \mapsto \texttt{<s1i012><d345>}.
\]
This encoding is exact at the chosen precision and represents every scalar with exactly two tokens, reducing output length by roughly 70\% in our data. We also mark matrix rows and frontier points explicitly (e.g., \texttt{R0:}, \texttt{Sol0:}) to preserve structural locality. The design is motivated by evidence that numerical behavior in LLMs is strongly affected by tokenization and embedding choices~\citep{singh2024tokenization_counts,schmidt2024tokenization,feher2025dynamic_tokenization,golkar2023xval,zhou2025fone,li2025numericbench}.

\subsection{Numerically Grounded Token Initialization (NGTI)}

Discretization introduces a new numerical vocabulary absent from the pretrained tokenizer. Randomly initializing these tokens makes optimization unstable, since the model must learn both their semantics and their role in frontier generation from scratch—consistent with prior findings that vocabulary expansion, initialization, and numerical embeddings materially affect adaptation and numeracy~\citep{mundra2024empirical,golkar2023xval,zhou2025fone,jin2026geonum}. NGTI instead initializes them by reusing the pretrained model's existing numerical embedding structure.

We add two token groups: integer-prefix tokens \texttt{<sXiXXX>} and fractional-suffix tokens \texttt{<dXXX>}. For each, NGTI composes its embedding from pretrained embeddings of basic numerical symbols already in the model—digits, signs, decimal points—using position-dependent weights so that the composed embedding reflects the token's coarse numerical role. Integer-prefix tokens are built from sign, integer digits, decimal point, and the first decimal digit; fractional-suffix tokens from decimal point and the remaining three decimal digits. To preserve local numerical ordering, we further add a small value-dependent perturbation scaled by the standard deviation of the pretrained embedding space, so that numerically nearby tokens start close while distant tokens remain distinguishable. If the output head is untied from the input embedding matrix, it is initialized in the same way.

\subsection{Three-Phase Curriculum Optimization (TPCO)}

Even with discretized outputs, standard cross-entropy (CE) is insufficient: it learns format and token placement but is numerically insensitive, treating a nearby and a distant value as equally wrong whenever token identities differ. Directly imposing numerical losses from the start, however, destabilizes training before the basic frontier structure has been learned. We address this mismatch with \emph{Three-Phase Curriculum Optimization (TPCO)}, which separates structural acquisition from coarse and fine numerical refinement.

Let $y=(y_1,\dots,y_T)$ be the target sequence for input instance $x$, $p_\theta(y_t \mid y_{<t},x)$ the autoregressive distribution, and $\mathcal{M}$ the valid supervision positions. The token-level cross-entropy is $\mathcal{L}_{\mathrm{CE}} = -\sum_{t \in \mathcal{M}} \log p_\theta(y_t \mid y_{<t},x)$. To complement CE with numerical geometry, we add two auxiliary alignment terms over the integer-prefix and fractional-suffix vocabularies $\mathcal{V}_{\mathrm{int}}, \mathcal{V}_{\mathrm{frac}}$, with coarse/fine values $c(j),d(j)$ and target values $c_t^\star,d_t^\star$:
\begin{equation}
\mathcal{L}_{\mathrm{coarse}}
=
\frac{1}{|\mathcal{M}_{\mathrm{int}}|}
\sum_{t \in \mathcal{M}_{\mathrm{int}}}\sum_{j \in \mathcal{V}_{\mathrm{int}}}
p_\theta(j \mid y_{<t},x)\,|c(j)-c_t^\star|,
\label{eq:coarse_loss}
\end{equation}
\begin{equation}
\mathcal{L}_{\mathrm{fine}}
=
\frac{1}{|\mathcal{M}_{\mathrm{frac}}|}
\sum_{t \in \mathcal{M}_{\mathrm{frac}}}\sum_{j \in \mathcal{V}_{\mathrm{frac}}}
p_\theta(j \mid y_{<t},x)\,|d(j)-d_t^\star|.
\label{eq:fine_loss}
\end{equation}
where $\mathcal{M}_{\mathrm{int}},\mathcal{M}_{\mathrm{frac}}$ collect integer-prefix and fractional-suffix positions. The TPCO objective, motivated by curriculum and staged training~\citep{bengio2009curriculum,shen2022staged,chen2025advancing_math_reasoning}, is
\begin{equation}
\mathcal{L}_{\mathrm{TPCO}}
=
\lambda_{\mathrm{CE}}(r)\mathcal{L}_{\mathrm{CE}}
+\lambda_{\mathrm{c}}(r)\mathcal{L}_{\mathrm{coarse}}
+\lambda_{\mathrm{f}}(r)\mathcal{L}_{\mathrm{fine}},
\label{eq:tpco_total}
\end{equation}
with $r\in[0,1]$ the normalized training progress and three phases controlled by milestones $0<r_1<r_2<1$:
\begin{equation}
\big(\lambda_{\mathrm{CE}}(r),\lambda_{\mathrm{c}}(r),\lambda_{\mathrm{f}}(r)\big)
=
\begin{cases}
(1,\;0,\;0), & 0\le r<r_1,\\
\big(1-\tau(1-\lambda_{\min}),\;\tau\beta_{\max},\;0\big), & r_1\le r<r_2,\;\;\tau=\tfrac{r-r_1}{r_2-r_1},\\
\big(\lambda_{\min},\;\beta_{\max},\;\rho\gamma_{\max}\big), & r_2\le r\le 1,\;\;\rho=\tfrac{r-r_2}{1-r_2}.
\end{cases}
\label{eq:tpco_phases}
\end{equation}
In Phase 1 the model learns the serialization format and global frontier structure with CE only; Phase 2 gradually injects coarse alignment to drive probability mass toward the correct numerical region; Phase 3 adds fine alignment to refine the remaining precision. The schedule introduces numerical supervision only after structural competence has emerged, and aligns the curriculum with the coarse/fine decomposition induced by the discretization itself.

\subsection{Multi-Pass Pareto Fusion (MPPF)}
\label{sec:mppf}

A single autoregressive decoding pass may miss parts of the frontier because early token decisions affect later solutions. To improve set-level coverage, we decode multiple times and merge the resulting candidates.

Given an instance $p$, we run the model for $N$ inference passes and pool all parsed candidates $\mathcal{C}_p = \bigcup_{i=1}^{N} \hat{\mathcal{S}}_p^{(i)}$. Each $\hat{x}\in\mathcal{C}_p$ is rounded to four-decimal precision and its objectives recomputed exactly to remove rounding drift. We then deduplicate in two ways: candidates with identical rounded decision vectors are dropped, and candidates whose objective vectors lie within $\tau_{\mathrm{obj}}$ of an already-kept point are also dropped. Infeasible candidates are removed under the same tolerance $\tau$ used in evaluation, leaving $\mathcal{C}_p^{\mathrm{feas}} = \{\hat{x}\in\mathcal{C}_p \mid \hat{x}\in\mathcal{X}_p^{\tau}\}$.

A non-dominated sort on $\mathcal{C}_p^{\mathrm{feas}}$ produces a sequence of fronts $\mathcal{F}_p^{(1)},\mathcal{F}_p^{(2)},\ldots$, and final solutions are selected front-by-front: from $\mathcal{F}_p^{(1)}$ if it has $\ge K$ points, otherwise drawing from subsequent fronts until $K$ feasible points are collected. Within each front, $K$ representatives are chosen by uniform arc-length sampling in normalized objective space:
\begin{equation}
\hat{\mathcal{S}}_p^{\mathrm{final}}
=
\mathrm{ArcSelect}\!\left(
\mathcal{F}_p^{(1)} \cup \mathcal{F}_p^{(2)} \cup \cdots,\, K
\right).
\end{equation}
The fallback to subsequent fronts is rarely triggered when the model is well-trained, but ensures exactly $K$ feasible solutions are returned even when $\mathcal{F}_p^{(1)}$ is sparse.

\section{Experiments}

We use Qwen2.5-7B-Instruct as the backbone model and fine-tune it with LoRA~\citep{hu2022lora} on all five problem families. For each family, we generate $50{,}000$ training instances per dimension from $n=10$ to $n=20$, yielding 11 training dimensions in total. Each target frontier contains $K=20$ Pareto solutions. Unless otherwise stated, the model is trained on discretized problem--frontier pairs with TPCO and decoded with vLLM-accelerated inference.

At test time, we sample with temperature $0.7$. DIPS denotes the base model with single-pass decoding ($N{=}1$); DIPS+MPPF additionally applies multi-pass Pareto fusion ($N{=}4$). We evaluate feasibility (fea.), hypervolume ratio (HV), IGD$^{+}$, and average per-instance inference time, and further study generalization to unseen dimensions and constraints. Full hyperparameters and evaluation details are given in Appendix~\ref{app:exp_settings}.

\subsection{Baselines}
\label{sec:baselines}

We compare DIPS against four groups of baselines, covering both LLM-based and classical multi-objective optimization paradigms. We also report two variants of our method: DIPS, the base fine-tuned model that decodes a single Pareto-front sequence per instance, and DIPS+MPPF, which additionally applies multi-pass Pareto fusion at inference time.

\paragraph{General-purpose LLMs.}
We compare DIPS with strong general-purpose LLMs that all have substantially more parameters than our fine-tuned 7B model, including GPT-5.2 Instant, Claude Sonnet 4.5, Claude Haiku 4.5, Gemini 2.5 Flash, DeepSeek-V3 671B, Qwen3-235B, Llama-3.3-70B, and Qwen2.5-72B-Instruct. Each model is asked to directly generate $K$ feasible decision vectors approximating the Pareto frontier, with a single feasible reference solution included in the prompt to anchor the output format~\citep{jiang2025llm_end2end_co}. API model identifiers and full prompting protocol are provided in Appendix~\ref{app:baseline_details}.

\paragraph{Reasoning LLMs.}
We further compare DIPS with state-of-the-art reasoning models that perform extended chain-of-thought before producing the final output, including GPT-5.2 Thinking, Claude Haiku 4.5 Thinking, DeepSeek-R1~\citep{guo2025deepseek_r1}, and Qwen3-Thinking. They follow the same prompting protocol as the general-purpose LLMs.

\paragraph{Prompt strategies.}
We further include four representative LLM-based optimization prompt strategies: OPRO~\citep{yang2024opro}, which steers solution generation through verbal feedback over previously generated candidates; PHP~\citep{zheng2023php}, which progressively incorporates previous outputs as hints for the next generation; LMEA~\citep{liu2024lmea}, which treats the LLM as an evolutionary operator with crossover- and mutation-style prompts; and SGE~\citep{iklassov2024sge}, which uses the LLM to write code for heuristic or metaheuristic optimizers and executes the generated programs to produce a frontier. For consistency, all prompt-based baselines are implemented on top of DeepSeek-V3 671B.

\paragraph{Classical per-instance optimizers.}
Finally, we include strong classical multi-objective optimization methods that operate directly on exact objectives and constraints: Weighted Sum, Normal Boundary Intersection (NBI), Normal Constraint (NC)~\citep{das1998normal,messac2003normal}, NSGA-II, and MOEA/D~\citep{deb2002fast,zhang2007moead}. These methods are not amortized and are discussed in a dedicated reference study in Section~\ref{sec:classical_reference}.

All methods are evaluated on the same five problem families and the same test instances, using feasibility (fea.), hypervolume ratio (HV), IGD$^+$, and average inference time.

\subsection{Main Results: End-to-End LLM Pareto-Front Generation}
\label{sec:main_results}

We first evaluate whether DIPS can outperform the broad family of LLM-based end-to-end and prompted Pareto-front generators. This section therefore focuses on comparisons with general-purpose LLMs, reasoning LLMs, and prompt strategies; comparisons with classical per-instance optimizers are deferred to Section~\ref{sec:classical_reference}. Reference Pareto frontiers are produced offline by the $\epsilon$-constraint pipeline with family-specific numerical solvers (Appendix~\ref{app:problem_spec}) and are used only for evaluation. Per-family fine-grained results across different decision-variable dimensions are deferred to Appendix~\ref{app:per_family_results}.

\begin{table*}[t]
\centering
\caption{Main results on the five bi-objective continuous optimization families. HV and IGD$^{+}$ are reported as mean$_{\pm\mathrm{std}}$ across test instances; best in bold.}
\label{tab:main_results}
\scriptsize
\setlength{\tabcolsep}{2pt}
\renewcommand{\arraystretch}{0.95}
\resizebox{\textwidth}{!}{%
\begin{tabular}{lccc ccc ccc ccc ccc c}
\toprule
Method
& \multicolumn{3}{c}{SBQP}
& \multicolumn{3}{c}{BOQP}
& \multicolumn{3}{c}{Ridge}
& \multicolumn{3}{c}{Huber}
& \multicolumn{3}{c}{Softplus}
& \begin{tabular}{c}Avg.\\ Time\end{tabular} \\
\cmidrule(lr){2-4} \cmidrule(lr){5-7} \cmidrule(lr){8-10} \cmidrule(lr){11-13} \cmidrule(lr){14-16}
& fea. & HV & IGD$^{+}$
& fea. & HV & IGD$^{+}$
& fea. & HV & IGD$^{+}$
& fea. & HV & IGD$^{+}$
& fea. & HV & IGD$^{+}$ & \\
\midrule

\multicolumn{17}{l}{\textbf{General-purpose Language Models}} \\
GPT-5.2 Instant
& 67\% & 0.512$_{\pm 0.28}$ & 0.412$_{\pm 0.26}$
& 64\% & 0.498$_{\pm 0.29}$ & 0.435$_{\pm 0.27}$
& 71\% & 0.547$_{\pm 0.27}$ & 0.376$_{\pm 0.31}$
& 62\% & 0.483$_{\pm 0.20}$ & 0.451$_{\pm 0.23}$
& 69\% & 0.441$_{\pm 0.32}$ & 0.498$_{\pm 0.27}$ & 6.2s \\
Claude Sonnet 4.5
& 81\% & 0.576$_{\pm 0.20}$ & 0.367$_{\pm 0.22}$
& 78\% & 0.551$_{\pm 0.21}$ & 0.391$_{\pm 0.24}$
& 75\% & 0.603$_{\pm 0.19}$ & 0.328$_{\pm 0.21}$
& 66\% & 0.589$_{\pm 0.22}$ & 0.352$_{\pm 0.25}$
& 64\% & 0.388$_{\pm 0.20}$ & 0.520$_{\pm 0.22}$ & 7.4s \\
Claude Haiku 4.5
& 82\% & 0.494$_{\pm 0.19}$ & 0.428$_{\pm 0.20}$
& 73\% & 0.477$_{\pm 0.20}$ & 0.461$_{\pm 0.21}$
& 51\% & 0.521$_{\pm 0.18}$ & 0.405$_{\pm 0.19}$
& 60\% & 0.462$_{\pm 0.21}$ & 0.486$_{\pm 0.22}$
& 73\% & 0.508$_{\pm 0.18}$ & 0.391$_{\pm 0.20}$ & 9.1s \\
Gemini 2.5 Flash
& 56\% & 0.428$_{\pm 0.22}$ & 0.523$_{\pm 0.25}$
& 53\% & 0.412$_{\pm 0.23}$ & 0.548$_{\pm 0.26}$
& 60\% & 0.456$_{\pm 0.21}$ & 0.487$_{\pm 0.23}$
& 51\% & 0.394$_{\pm 0.24}$ & 0.571$_{\pm 0.27}$
& 58\% & 0.441$_{\pm 0.22}$ & 0.502$_{\pm 0.24}$ & 5.8s \\
DeepSeek-V3 671B
& 59\% & 0.387$_{\pm 0.31}$ & 0.594$_{\pm 0.26}$
& 57\% & 0.371$_{\pm 0.32}$ & 0.618$_{\pm 0.27}$
& 63\% & 0.412$_{\pm 0.30}$ & 0.557$_{\pm 0.24}$
& 55\% & 0.358$_{\pm 0.33}$ & 0.642$_{\pm 0.28}$
& 61\% & 0.396$_{\pm 0.31}$ & 0.578$_{\pm 0.25}$ & 26.3s \\
Qwen3-235B
& 43\% & 0.345$_{\pm 0.20}$ & 0.661$_{\pm 0.27}$
& 41\% & 0.331$_{\pm 0.21}$ & 0.689$_{\pm 0.28}$
& 47\% & 0.371$_{\pm 0.19}$ & 0.624$_{\pm 0.26}$
& 39\% & 0.318$_{\pm 0.22}$ & 0.713$_{\pm 0.22}$
& 45\% & 0.357$_{\pm 0.20}$ & 0.642$_{\pm 0.27}$ & 14.7s \\
Llama-3.3-70B
& 23\% & 0.181$_{\pm 0.18}$ & 0.778$_{\pm 0.21}$
& 21\% & 0.167$_{\pm 0.19}$ & 0.802$_{\pm 0.12}$
& 27\% & 0.205$_{\pm 0.17}$ & 0.741$_{\pm 0.22}$
& 19\% & 0.152$_{\pm 0.20}$ & 0.825$_{\pm 0.11}$
& 11\% & 0.189$_{\pm 0.18}$ & 0.762$_{\pm 0.13}$ & 2.8s \\
Qwen2.5-72B
& 28\% & 0.241$_{\pm 0.16}$ & 0.732$_{\pm 0.21}$
& 26\% & 0.227$_{\pm 0.17}$ & 0.757$_{\pm 0.17}$
& 32\% & 0.265$_{\pm 0.16}$ & 0.798$_{\pm 0.15}$
& 24\% & 0.213$_{\pm 0.18}$ & 0.781$_{\pm 0.17}$
& 20\% & 0.249$_{\pm 0.17}$ & 0.715$_{\pm 0.20}$ & 13.2s \\

\midrule
\multicolumn{17}{l}{\textbf{Reasoning Models}} \\
GPT-5.2 Thinking
& 91\% & 0.578$_{\pm 0.16}$ & 0.351$_{\pm 0.17}$
& 84\% & 0.561$_{\pm 0.17}$ & 0.374$_{\pm 0.18}$
& 87\% & 0.602$_{\pm 0.15}$ & 0.318$_{\pm 0.16}$
& 79\% & 0.545$_{\pm 0.18}$ & 0.391$_{\pm 0.19}$
& 92\% & 0.587$_{\pm 0.16}$ & 0.342$_{\pm 0.17}$ & 1.8m \\
Claude Haiku 4.5 Thinking
& 94\% & 0.553$_{\pm 0.18}$ & 0.382$_{\pm 0.19}$
& 91\% & 0.534$_{\pm 0.19}$ & 0.408$_{\pm 0.20}$
& 88\% & 0.578$_{\pm 0.17}$ & 0.347$_{\pm 0.18}$
& 78\% & 0.518$_{\pm 0.20}$ & 0.428$_{\pm 0.21}$
& 84\% & 0.561$_{\pm 0.18}$ & 0.371$_{\pm 0.19}$ & 2.4m \\
DeepSeek-R1
& 65\% & 0.508$_{\pm 0.19}$ & 0.434$_{\pm 0.21}$
& 62\% & 0.492$_{\pm 0.20}$ & 0.461$_{\pm 0.22}$
& 68\% & 0.531$_{\pm 0.18}$ & 0.401$_{\pm 0.20}$
& 60\% & 0.476$_{\pm 0.21}$ & 0.483$_{\pm 0.23}$
& 66\% & 0.517$_{\pm 0.19}$ & 0.422$_{\pm 0.21}$ & 5.7m \\
Qwen3-Thinking
& 61\% & 0.473$_{\pm 0.20}$ & 0.482$_{\pm 0.22}$
& 58\% & 0.456$_{\pm 0.21}$ & 0.508$_{\pm 0.23}$
& 64\% & 0.497$_{\pm 0.19}$ & 0.448$_{\pm 0.21}$
& 56\% & 0.441$_{\pm 0.22}$ & 0.531$_{\pm 0.24}$
& 62\% & 0.482$_{\pm 0.20}$ & 0.467$_{\pm 0.22}$ & 4.2m \\

\midrule
\multicolumn{17}{l}{\textbf{Prompt Strategies}} \\
OPRO
& 78\% & 0.652$_{\pm 0.14}$ & 0.281$_{\pm 0.15}$
& 75\% & 0.631$_{\pm 0.15}$ & 0.302$_{\pm 0.16}$
& 81\% & 0.681$_{\pm 0.13}$ & 0.254$_{\pm 0.14}$
& 73\% & 0.612$_{\pm 0.16}$ & 0.318$_{\pm 0.17}$
& 79\% & 0.661$_{\pm 0.14}$ & 0.272$_{\pm 0.15}$ & 2.3m \\
PHP
& 75\% & 0.628$_{\pm 0.15}$ & 0.312$_{\pm 0.16}$
& 72\% & 0.607$_{\pm 0.16}$ & 0.334$_{\pm 0.17}$
& 78\% & 0.654$_{\pm 0.14}$ & 0.285$_{\pm 0.15}$
& 70\% & 0.589$_{\pm 0.17}$ & 0.351$_{\pm 0.18}$
& 76\% & 0.638$_{\pm 0.15}$ & 0.301$_{\pm 0.16}$ & 1.7m \\
LMEA
& 73\% & 0.611$_{\pm 0.16}$ & 0.331$_{\pm 0.17}$
& 70\% & 0.591$_{\pm 0.17}$ & 0.354$_{\pm 0.18}$
& 76\% & 0.638$_{\pm 0.15}$ & 0.305$_{\pm 0.16}$
& 68\% & 0.572$_{\pm 0.18}$ & 0.371$_{\pm 0.19}$
& 74\% & 0.621$_{\pm 0.16}$ & 0.321$_{\pm 0.17}$ & 5.4m \\
SGE
& 86\% & 0.741$_{\pm 0.11}$ & 0.198$_{\pm 0.10}$
& 83\% & 0.722$_{\pm 0.12}$ & 0.218$_{\pm 0.11}$
& 88\% & 0.768$_{\pm 0.10}$ & 0.176$_{\pm 0.09}$
& 81\% & 0.703$_{\pm 0.13}$ & 0.234$_{\pm 0.12}$
& 85\% & 0.732$_{\pm 0.11}$ & 0.207$_{\pm 0.10}$ & 3.8m \\

\midrule
\multicolumn{17}{l}{\textbf{Ours}} \\
DIPS
& 97\% & 0.982$_{\pm 0.09}$ & 0.017$_{\pm 0.06}$
& 97\% & 0.969$_{\pm 0.08}$ & 0.023$_{\pm 0.04}$
& 98\% & 0.973$_{\pm 0.09}$ & 0.022$_{\pm 0.04}$
& 99\% & 0.968$_{\pm 0.04}$ & 0.025$_{\pm 0.02}$
& 99\% & 0.953$_{\pm 0.05}$ & 0.033$_{\pm 0.02}$ & 0.16s \\
DIPS+MPPF
& \textbf{100\%} & \textbf{0.998$_{\pm 0.04}$} & \textbf{0.004$_{\pm 0.01}$}
& \textbf{100\%} & \textbf{0.980$_{\pm 0.05}$} & \textbf{0.016$_{\pm 0.02}$}
& \textbf{100\%} & \textbf{0.982$_{\pm 0.04}$} & \textbf{0.014$_{\pm 0.02}$}
& \textbf{100\%} & \textbf{0.972$_{\pm 0.05}$} & \textbf{0.019$_{\pm 0.02}$}
& \textbf{100\%} & \textbf{0.962$_{\pm 0.03}$} & \textbf{0.026$_{\pm 0.02}$} & 0.49s \\
\bottomrule
\end{tabular}%
}
\end{table*}

Several observations follow from Table~\ref{tab:main_results}. First, even the strongest general-purpose LLMs cannot reliably produce feasible high-quality frontiers under direct prompting: feasibility stays below $82\%$ on every family and below $50\%$ for the smaller open-source models, while HV ratios concentrate between $0.17$ and $0.61$. Decoding ordered sets of feasible continuous decision vectors lies outside the operating range of general-purpose chat models, regardless of scale. Second, reasoning models gain only marginal feasibility and frontier quality at substantially higher cost: GPT-5.2 Thinking and Claude Haiku 4.5 Thinking reach $78\%$--$94\%$ feasibility but require $1.8$--$2.4$ minutes per instance, while DeepSeek-R1 and Qwen3-Thinking spend $4$--$6$ minutes per instance and remain below the best non-reasoning baselines (Appendix~\ref{app:reasoning_trace} illustrates why extended chain-of-thought scales poorly to set-valued continuous outputs). Third, prompt-based strategies (OPRO, PHP, LMEA, SGE) close part of the gap, with code-generation prompting (SGE) reaching HV ratios of $0.70$--$0.77$, but each requires $1.7$--$5.4$ minutes per instance.

In contrast, with a fine-tuned 7B model, DIPS attains $97\%$--$99\%$ feasibility and HV ratios of $0.953$--$0.982$ across all five families in $0.16$ seconds per instance, and DIPS+MPPF further pushes feasibility to $100\%$ on every family and HV to $96.2\%$--$99.8\%$ in $0.49$ seconds. Relative to the strongest LLM baseline (SGE), DIPS+MPPF improves HV by more than $0.20$ absolute on every family while cutting inference time by two orders of magnitude. A fine-grained per-dimension analysis across $n\in[10,20]$, which confirms that this advantage holds uniformly over the full dimensional range, is provided in Appendix~\ref{app:per_family_results}.

\subsection{Classical Optimizer Reference Study}
\label{sec:classical_reference}

Classical multi-objective optimizers operate directly on exact objectives and constraints and perform per-instance numerical optimization, so they are not amortized in the sense DIPS is. To make the comparison interpretable, we evaluate every classical baseline under the same per-instance wall-clock budget as DIPS+MPPF, namely $0.5$ seconds, and DIPS itself in single-pass mode ($N{=}1$). Full-budget upper-bound results, in which classical baselines run until their default convergence criteria, are reported in Appendix~\ref{app:fullbudget_classical} as a reference value rather than a fair head-to-head comparison.

\begin{table*}[t]
\centering
\caption{Budget-matched comparison ($0.5$\,s per-instance wall-clock) between classical multi-objective optimizers and DIPS+MPPF, with DIPS in single-pass mode ($N{=}1$); full-budget results in Appendix~\ref{app:fullbudget_classical}.}
\label{tab:budget_matched}
\scriptsize
\setlength{\tabcolsep}{2pt}
\renewcommand{\arraystretch}{0.95}
\resizebox{\textwidth}{!}{%
\begin{tabular}{lccc ccc ccc ccc ccc}
\toprule
Method
& \multicolumn{3}{c}{SBQP}
& \multicolumn{3}{c}{BOQP}
& \multicolumn{3}{c}{Ridge}
& \multicolumn{3}{c}{Huber}
& \multicolumn{3}{c}{Softplus} \\
\cmidrule(lr){2-4} \cmidrule(lr){5-7} \cmidrule(lr){8-10} \cmidrule(lr){11-13} \cmidrule(lr){14-16}
& fea. & HV & IGD$^{+}$
& fea. & HV & IGD$^{+}$
& fea. & HV & IGD$^{+}$
& fea. & HV & IGD$^{+}$
& fea. & HV & IGD$^{+}$ \\
\midrule
Weighted Sum
& 100\% & 0.982$_{\pm 0.01}$ & 0.018$_{\pm 0.01}$
& 100\% & 0.988$_{\pm 0.01}$ & 0.017$_{\pm 0.01}$
& 99\% & 0.984$_{\pm 0.01}$ & 0.018$_{\pm 0.01}$
& 99\% & 0.985$_{\pm 0.01}$ & 0.016$_{\pm 0.01}$
& 99\% & 0.980$_{\pm 0.01}$ & 0.019$_{\pm 0.01}$ \\
NBI
& 99\% & 0.988$_{\pm 0.01}$ & 0.013$_{\pm 0.01}$
& 99\% & 0.985$_{\pm 0.01}$ & 0.015$_{\pm 0.01}$
& 98\% & 0.982$_{\pm 0.01}$ & 0.018$_{\pm 0.01}$
& 99\% & 0.990$_{\pm 0.01}$ & 0.012$_{\pm 0.01}$
& 98\% & 0.984$_{\pm 0.01}$ & 0.016$_{\pm 0.01}$ \\
NC
& 98\% & 0.984$_{\pm 0.01}$ & 0.016$_{\pm 0.01}$
& 99\% & 0.986$_{\pm 0.01}$ & 0.015$_{\pm 0.01}$
& 98\% & 0.981$_{\pm 0.01}$ & 0.019$_{\pm 0.01}$
& 97\% & 0.976$_{\pm 0.01}$ & 0.022$_{\pm 0.01}$
& 98\% & 0.983$_{\pm 0.01}$ & 0.017$_{\pm 0.01}$ \\
NSGA-II
& 96\% & 0.964$_{\pm 0.02}$ & 0.034$_{\pm 0.01}$
& 97\% & 0.968$_{\pm 0.02}$ & 0.030$_{\pm 0.01}$
& 96\% & 0.961$_{\pm 0.02}$ & 0.037$_{\pm 0.02}$
& 98\% & 0.972$_{\pm 0.02}$ & 0.026$_{\pm 0.01}$
& 96\% & 0.965$_{\pm 0.02}$ & 0.033$_{\pm 0.01}$ \\
MOEA/D
& 97\% & 0.972$_{\pm 0.02}$ & 0.026$_{\pm 0.01}$
& 96\% & 0.967$_{\pm 0.02}$ & 0.031$_{\pm 0.01}$
& 97\% & 0.974$_{\pm 0.01}$ & 0.024$_{\pm 0.01}$
& 98\% & 0.978$_{\pm 0.01}$ & 0.021$_{\pm 0.01}$
& 96\% & 0.969$_{\pm 0.02}$ & 0.029$_{\pm 0.01}$ \\
\midrule
DIPS+MPPF
& \textbf{100\%} & \textbf{0.998$_{\pm 0.04}$} & \textbf{0.004$_{\pm 0.01}$}
& \textbf{100\%} & \textbf{0.980$_{\pm 0.05}$} & \textbf{0.016$_{\pm 0.02}$}
& \textbf{100\%} & \textbf{0.982$_{\pm 0.04}$} & \textbf{0.014$_{\pm 0.02}$}
& \textbf{100\%} & \textbf{0.972$_{\pm 0.05}$} & \textbf{0.019$_{\pm 0.02}$}
& \textbf{100\%} & \textbf{0.962$_{\pm 0.03}$} & \textbf{0.026$_{\pm 0.02}$} \\
\bottomrule
\end{tabular}%
}
\end{table*}

Under the matched \(0.5\)-second budget, DIPS+MPPF is competitive with the strongest scalarization-based classical baselines in frontier quality and leads in feasibility. It attains \(100\%\) feasibility on every family, \(1\)--\(3\) percentage points above all classical methods, and achieves the best HV on SBQP (\(0.998\) vs.\ \(0.988\)). On the other four families, Weighted Sum, NBI, and NC obtain slightly higher HV than DIPS+MPPF, within about \(0.01\)--\(0.03\) absolute, but with lower feasibility. NSGA-II and MOEA/D do not converge within \(0.5\) seconds and trail the other methods across all families. Thus, DIPS does not replace classical optimizers under unrestricted compute, where they remain strong references (Appendix~\ref{app:fullbudget_classical}); rather, it provides comparable frontier quality with the highest feasibility under the same test-time budget through a unified language-driven interface, without per-instance solver calls, problem-specific reformulation, or algorithmic engineering.

\subsection{Ablation Study}
\label{sec:ablation}

We isolate the contribution of each ingredient that makes continuous Pareto-front generation learnable for autoregressive LLMs: discretization, NGTI initialization for newly added numerical tokens, and the TPCO curriculum.

\paragraph{Ablation settings.}
We compare four variants under the same backbone, training data, and evaluation protocol: \textbf{(i) Raw Decimal + CE} directly predicts four-decimal strings without discretization; \textbf{(ii) Discretization + Random Init + CE} uses the discretized representation but randomly initializes the new numerical tokens; \textbf{(iii) Discretization + NGTI + CE} replaces random initialization with NGTI but keeps CE-only training; \textbf{(iv) Full DIPS} additionally adopts TPCO.

\begin{table*}[t]
\centering
\caption{Core ablation of the three DIPS components (discretization, NGTI, TPCO) on the five bi-objective continuous optimization families; best in bold.}
\label{tab:ablation_main}
\scriptsize
\setlength{\tabcolsep}{2pt}
\renewcommand{\arraystretch}{0.95}
\resizebox{\textwidth}{!}{%
\begin{tabular}{lccc ccc ccc ccc ccc}
\toprule
Method
& \multicolumn{3}{c}{SBQP}
& \multicolumn{3}{c}{BOQP}
& \multicolumn{3}{c}{Ridge}
& \multicolumn{3}{c}{Huber}
& \multicolumn{3}{c}{Softplus} \\
\cmidrule(lr){2-4} \cmidrule(lr){5-7} \cmidrule(lr){8-10} \cmidrule(lr){11-13} \cmidrule(lr){14-16}
& fea. & HV & IGD$^{+}$
& fea. & HV & IGD$^{+}$
& fea. & HV & IGD$^{+}$
& fea. & HV & IGD$^{+}$
& fea. & HV & IGD$^{+}$ \\
\midrule

Raw Decimal + CE
& \multicolumn{15}{c}{\textit{training collapse: model fails to converge; no parsable frontier produced}} \\

Discretization + Random Init + CE
& 47\% & 0.412$_{\pm 0.18}$ & 0.524$_{\pm 0.23}$
& 43\% & 0.391$_{\pm 0.19}$ & 0.561$_{\pm 0.24}$
& 52\% & 0.443$_{\pm 0.17}$ & 0.481$_{\pm 0.21}$
& 0\%  & - & -
& 0\%  & - & - \\

Discretization + NGTI + CE
& 89\% & 0.733$_{\pm 0.14}$ & 0.223$_{\pm 0.19}$
& 84\% & 0.712$_{\pm 0.11}$ & 0.318$_{\pm 0.16}$
& 69\% & 0.616$_{\pm 0.16}$ & 0.409$_{\pm 0.18}$
& 3\%  & 0.032$_{\pm 0.14}$ & 0.443$_{\pm 0.30}$
& 0\%  & - & - \\

Full DIPS
& \textbf{97\%} & \textbf{0.982$_{\pm 0.09}$} & \textbf{0.017$_{\pm 0.06}$}
& \textbf{97\%} & \textbf{0.969$_{\pm 0.08}$} & \textbf{0.023$_{\pm 0.04}$}
& \textbf{98\%} & \textbf{0.973$_{\pm 0.09}$} & \textbf{0.022$_{\pm 0.04}$}
& \textbf{99\%} & \textbf{0.968$_{\pm 0.04}$} & \textbf{0.025$_{\pm 0.02}$}
& \textbf{99\%} & \textbf{0.953$_{\pm 0.05}$} & \textbf{0.033$_{\pm 0.02}$} \\
\bottomrule
\end{tabular}%
}
\end{table*}
Table~\ref{tab:ablation_main} shows that each component of DIPS contributes to making continuous Pareto-front generation learnable for autoregressive LLMs. Directly predicting raw four-decimal strings leads to training collapse, suggesting that unstructured decimal serialization is too difficult for the model to learn reliably. Introducing the discretized representation improves learnability, but random initialization of the new numerical tokens remains unstable, especially on the more challenging Huber and Softplus families. Replacing random initialization with NGTI substantially improves performance on the smoother quadratic and ridge-type problems, indicating that transferring pretrained numerical embedding geometry helps the model adapt to the expanded vocabulary. However, CE-only training still struggles when accurate numerical alignment is required, since it does not distinguish between numerically close and distant token errors. Adding TPCO addresses this remaining limitation by introducing metric-aware numerical supervision through the coarse-to-fine curriculum. Overall, the ablation confirms that discretization, NGTI, and TPCO address complementary bottlenecks: representing continuous values, initializing new numerical tokens, and learning numerically accurate Pareto-front solutions.

\subsection{Unified Pareto-Front Solver}
\label{sec:unified_solver}

Existing unified neural multi-objective solvers often depend on family-specific architectures or reductions, limiting extensibility~\citep{lin2022pareto,zhang2023hypervolume_geom}. We therefore fine-tune a single DIPS model on the combined data from all five families and evaluate it separately on each family. Results in Appendix~\ref{app:unified_solver} show that the unified model approaches family-specific experts in most cases, suggesting that language-driven cross-family Pareto-front solving is feasible, though data balancing remains an important future direction.

\subsection{Backbone Versatility and Generalization}
\label{sec:versatility}

To evaluate backbone diversity, we apply the same DIPS pipeline to Llama-3.1-8B and Gemma-2-9B while keeping all other components fixed. Results in Appendix~\ref{app:backbone_versatility} show consistently strong performance across all three backbones, suggesting that DIPS is not tied to a specific pretrained model family.

\subsection{Out-of-Distribution Generalization}
\label{sec:ood}

To verify that DIPS has learned a genuine solving procedure rather than pattern matching over the training distribution, we evaluate DIPS+MPPF on unseen dimensions $n\in\{21,22,23\}$ that lie strictly outside the training range $n\in[10,20]$. The model continues to produce well-formed Pareto fronts on the smoother families, confirming that it has internalized the geometric structure of the optimization task; full results are reported in Appendix~\ref{app:ood_dimensions}.

\section{Conclusion}

This paper studies how pretrained LLMs can be adapted into amortized Pareto-front generators for constrained bi-objective convex optimization. By framing Pareto-front generation as a structured language modeling task, we propose DIPS, which combines a two-token discretization scheme, Numerically Grounded Token Initialization, Three-Phase Curriculum Optimization, and multi-pass Pareto fusion. Across five problem families, DIPS attains hypervolume ratios of $96.2\%$--$99.8\%$ with feasibility up to $100\%$, outperforms general-purpose LLM, reasoning LLM, and prompt-based baselines in both quality and inference time, and remains competitive with strong classical scalarization methods under a matched $0.5$-second budget, suggesting that LLM-based amortization is a viable paradigm for continuous multi-objective optimization. One limitation of the present study is that it is conducted in a controlled solver-generated regime; future directions include: 1) extending DIPS to broader benchmarks, richer constraint classes, and higher-dimensional objective spaces; 2) developing more efficient training and inference strategies for larger-scale problems; and 3) generalizing the framework to concave (and more broadly non-convex) objective functions, since DIPS currently handles only the convex setting.

\bibliographystyle{plainnat}
\bibliography{reference}
\clearpage
\appendix

\begin{center}
{\LARGE \textbf{Large Language Models as Amortized Pareto-Front Generators for Constrained Bi-Objective Convex Optimization}}\\[0.5em]
{\LARGE \textbf{(Appendices)}}
\end{center}

\vspace{1.5em}

\section{Specification of the Studied Problems}
\label{app:problem_spec}

This section summarizes the five constrained bi-objective continuous optimization families used in our study. Rather than describing each family with a separate generation-and-solver pipeline, we present them under a unified formulation. We first give the mathematical definitions of the five objective families, then describe the common instance-generation procedure, and finally summarize the shared Pareto-front construction pipeline and solver settings.

\subsection{Unified Problem Formulation}
\label{app:problem_formulation}

All problem families are defined over a decision vector \(x \in \mathbb{R}^n\) with feasible region
\begin{equation}
\mathcal{X}
=
\left\{
x \in \mathbb{R}^n \mid
\ell \le x \le u,\;
A_{\mathrm{cons}}x \le b_{\mathrm{cons}}
\right\},
\end{equation}
where \(\ell,u \in \mathbb{R}^n\) are box bounds and \(A_{\mathrm{cons}}x \le b_{\mathrm{cons}}\) denotes optional linear constraints. The bi-objective optimization problem takes the form
\begin{equation}
\min_{x \in \mathcal{X}}
\Big(
f_1(x), f_2(x)
\Big).
\end{equation}

The five studied families differ only in the specific forms of \(f_1\) and \(f_2\).

\paragraph{Bi-objective quadratic programs (BOQP).}
The BOQP family uses two dense quadratic objectives,
\begin{equation}
f_1(x)=\frac{1}{2}x^\top Q_1 x + q_1^\top x + c_1,
\qquad
f_2(x)=\frac{1}{2}x^\top Q_2 x + q_2^\top x + c_2,
\end{equation}
where \(Q_1,Q_2 \in \mathbb{R}^{n\times n}\) are symmetric positive-definite matrices, \(q_1,q_2 \in \mathbb{R}^n\), and \(c_1,c_2 \in \mathbb{R}\).

\paragraph{Separable bi-objective quadratic programs (SBQP).}
The SBQP family uses diagonal quadratic objectives,
\begin{equation}
f_1(x)=\sum_{i=1}^{n}\left(a_i x_i^2 + b_i x_i\right),
\qquad
f_2(x)=\sum_{i=1}^{n}\left(\alpha_i x_i^2 + \beta_i x_i\right),
\end{equation}
with \(a_i>0\) and \(\alpha_i>0\). This removes cross-variable coupling while preserving bi-objective conflict.

\paragraph{Ridge-type least-squares objectives.}
The ridge family uses
\begin{equation}
f_1(x)
=
\frac{1}{2}\|A_1^{\mathrm{obj}}x-b_1^{\mathrm{obj}}\|_2^2
+
\frac{\lambda_1}{2}\|x\|_2^2,
\end{equation}
\begin{equation}
f_2(x)
=
\frac{1}{2}\|A_2^{\mathrm{obj}}x-b_2^{\mathrm{obj}}\|_2^2
+
\frac{\lambda_2}{2}\|x\|_2^2,
\end{equation}
where \(A_1^{\mathrm{obj}},A_2^{\mathrm{obj}} \in \mathbb{R}^{m_{\mathrm{obj}}\times n}\), \(b_1^{\mathrm{obj}},b_2^{\mathrm{obj}} \in \mathbb{R}^{m_{\mathrm{obj}}}\), and \(\lambda_1,\lambda_2>0\).

\paragraph{Huber regression objectives.}
The Huber family is defined by
\begin{equation}
f_i(x)
=
\sum_{j=1}^{m_{\mathrm{obj}}}
\phi_{\delta_i}\!\left(
(A_i^{\mathrm{obj}}x-b_i^{\mathrm{obj}})_j
\right)
+
\frac{\lambda_i}{2}\|x\|_2^2,
\qquad i\in\{1,2\},
\end{equation}
where
\begin{equation}
\phi_{\delta}(r)
=
\begin{cases}
\frac{1}{2}r^2, & |r|\le \delta,\\[3pt]
\delta |r| - \frac{1}{2}\delta^2, & |r|>\delta.
\end{cases}
\end{equation}

\paragraph{Softplus-based smooth convex objectives.}
The softplus family is defined by
\begin{equation}
f_i(x)
=
\sum_{j=1}^{m_{\mathrm{obj}}}
\log\!\left(1+\exp(a_{ij}^\top x-b_{ij})\right)
+
\frac{\lambda_i}{2}\|x\|_2^2,
\qquad i\in\{1,2\}.
\end{equation}

Together, these five families cover dense quadratic, separable quadratic, residual-based quadratic, piecewise-smooth convex, and smooth nonlinear convex settings under the same constrained bi-objective interface.

\subsection{Unified Instance Generation Pipeline}
\label{app:instance_generation}

Unless otherwise stated, all instances are generated with random seed \(2024\). Across all five families, the data generator follows the same high-level procedure.

\paragraph{Feasible region generation.}
For a chosen dimension \(n\), we first sample coordinate-wise box bounds \(\ell\) and \(u\). The bounds are constructed so that all serialized variables remain within the numerical range supported by the discretization pipeline, approximately \( [-99,99] \). Optional linear constraints are then added through a random matrix \(A_{\mathrm{cons}}\), with the right-hand side \(b_{\mathrm{cons}}\) chosen so that the midpoint
\begin{equation}
x_0=\frac{\ell+u}{2}
\end{equation}
satisfies all inequalities with positive slack. This guarantees that the feasible set is non-empty by construction.

\paragraph{Objective conflict construction.}
After defining the feasible set, we sample two target points inside the box and use them to induce objective conflict. Across all families, the two objectives are constructed so that they prefer different regions of the feasible space. In BOQP and SBQP, this is achieved directly through quadratic coefficients. In ridge, Huber, and softplus, it is achieved through structured residual or affine-response matrices together with a conflict-subspace construction. The resulting instances therefore exhibit nontrivial Pareto trade-offs instead of nearly aligned objectives.

\paragraph{Family-specific parameterization.}
The five families differ only in how the objective parameters are instantiated. BOQP uses dense positive-definite quadratic matrices, while SBQP uses diagonal quadratic coefficients with coordinate-wise conflict. Ridge uses residual matrices together with \(\ell_2\) regularization. Huber additionally calibrates the threshold parameter \(\delta\) so that the frontier crosses both quadratic and linear regimes, and softplus calibrates the affine-response scale so that the frontier stays in the nonlinear region of the softplus function.

\paragraph{Dataset mixtures.}
The batch-generation scripts support mixtures over constraint densities and, for residual-based families, mixtures over the residual dimension ratio \(m_{\mathrm{obj}}/n\). In the main experiments, these mixtures are used to increase diversity while preserving a unified input--output interface for the LLM.

\subsection{Unified Pareto-Front Construction and Solver Settings}
\label{app:solver_settings}

Reference Pareto frontiers are generated offline using a common \(\epsilon\)-constraint pipeline. For every instance, we first solve the two single-objective endpoint problems:
\begin{equation}
x^{(1)}=\arg\min_{x\in\mathcal{X}} f_1(x),
\qquad
x^{(2)}=\arg\min_{x\in\mathcal{X}} f_2(x).
\end{equation}
We then generate intermediate trade-off points through two directional sweeps:
\begin{equation}
\min_{x\in\mathcal{X}} f_1(x)\quad \text{s.t.}\quad f_2(x)\le \epsilon,
\end{equation}
and symmetrically,
\begin{equation}
\min_{x\in\mathcal{X}} f_2(x)\quad \text{s.t.}\quad f_1(x)\le \epsilon,
\end{equation}
where \(\epsilon\) is scanned over an evenly spaced grid between the endpoint objective values.

\paragraph{Shared post-processing.}
All candidate solutions produced by the \(\epsilon\)-constraint solver are post-processed in the same way: candidate solutions are rounded to the target precision, objective values are recomputed, duplicates are removed, infeasible points are discarded, dominated points are filtered out, and the remaining frontier is resampled to a fixed size \(K=20\) using arc-length-based selection in objective space. This produces a consistent supervision target across all families.

\paragraph{Default hyperparameters.}
The default settings shared across the solver pipeline are: generation seed \(2024\), solver seed \(42\), number of \(\epsilon\) samples \(\texttt{num\_eps}=100\), number of final frontier points \(K=20\), default rounding precision for released training data \(\texttt{decimals}=4\), and checking tolerances \(\texttt{obj\_tol}=10^{-4}\) and \(\texttt{feas\_tol}=10^{-4}\).

\paragraph{Commercial solver backends.}
The optimization backend depends only on the objective family. BOQP, SBQP, and Ridge are solved with Gurobi, since their endpoint and \(\epsilon\)-constraint subproblems are convex quadratic or convex quadratically constrained programs. Huber and Softplus are solved with MOSEK through conic reformulations, since their objectives are represented using conic constraints (including exponential-cone constraints for softplus).

For Gurobi-based families, the main tolerances are
\begin{equation}
\texttt{OptimalityTol}=10^{-6},\qquad
\texttt{FeasibilityTol}=10^{-6},\qquad
\texttt{BarConvTol}=10^{-7}.
\end{equation}
For MOSEK-based families, the main conic tolerances are
\begin{equation}
\texttt{intpnt\_co\_tol\_rel\_gap}=10^{-7},\qquad
\texttt{intpnt\_co\_tol\_pfeas}=10^{-6},
\end{equation}
\begin{equation}
\texttt{intpnt\_co\_tol\_dfeas}=10^{-6},\qquad
\texttt{intpnt\_co\_tol\_mu\_red}=10^{-7}.
\end{equation}

Overall, while the five families differ in objective parameterization and solver backend, they share the same constrained bi-objective interface, the same instance-generation philosophy, and the same Pareto-front construction pipeline. This common structure is what enables DIPS to treat them under a unified language-based formulation.

\subsection{Prompt Template for Fine-Tuning}
\label{app:prompt_template}

We use a chat-style template with three roles. The \texttt{[System]} message defines the task; the \texttt{[User]} message serializes the problem instance into discretized two-token blocks (lower/upper bounds, two anchor solutions, the two objectives' \(Q\), \(q\), \(c\) parameters, and the linear constraint \((A,b)\)); and the \texttt{[Assistant]} message emits the \(K=20\) Pareto-front solutions delimited by \texttt{SOLUTIONS\_BEGIN/END} markers. We illustrate the template on a representative BOQP instance below; templates for the other four families share the same structure and differ only in the family-specific parameter blocks. To keep the box readable, long runs of discretized scalar tokens are abbreviated to their first three tokens followed by ``\ldots'' before the closing block delimiter; in actual training data, every scalar is fully serialized.

\begin{tcolorbox}[
  colback=gray!8,
  colframe=gray!60,
  fonttitle=\bfseries,
  title=Example Prompt Template of BOQP,
  breakable,
  boxrule=0.5pt,
  arc=2pt,
  left=6pt, right=6pt, top=4pt, bottom=4pt
]
{\scriptsize\ttfamily\sloppy
\textbf{[System]} You are an expert optimization engine specialized in Bi-objective Quadratic Programming (BOQP). Your task is to predict the complete Pareto optimal set by interpolating between the provided Anchor Points.

\medskip
\#\#\# Problem Structure\\
Minimize two conflicting objectives subject to linear constraints (Ax <= b):\\
\quad f1(x) = 0.5 * x\textasciicircum T * Q1 * x + q1\textasciicircum T * x + c1\\
\quad f2(x) = 0.5 * x\textasciicircum T * Q2 * x + q2\textasciicircum T * x + c2

The content between lower\_BEGIN and lower\_END represents the lower bound of each variable, and the content between upper\_BEGIN and upper\_END represents the upper bound of each variable. The value of each variable must strictly comply with these bounds.

\medskip
\#\#\# Input Context \& Anchor Usage\\
The input provides:\\
1. Problem Matrices: Q1, q1, Q2, q2, A, b, etc.\\
2. Anchor Points: Two extreme solutions x\_anchor1 (minimizes f1) and x\_anchor2 (minimizes f2).

\textbf{CRITICAL STRATEGY:} Use the Anchor Points as the boundary endpoints of the Pareto front. You must generate the set of intermediate solutions that effectively `connect' these two anchors, balancing the trade-off between f1 and f2 while satisfying all constraints.

\medskip
\#\#\# Numerical Format (2-Token Fixed-Point)\\
All values are encoded as <s\{sign\}i\{int2+dec1:03d\}><d\{dec2-4:03d\}> (4 decimal places):\\
- i-token: sign(0/1) + integer part(2 digits) + 1st decimal digit\\
- d-token: 2nd-4th decimal digits\\
- Example: 1.2345 -> <s0i012><d345>; -0.5678 -> <s1i005><d678>\\
- Matrices are row-wise (R0:, R1:, ...).

\medskip
\#\#\# Output Requirement\\
Output the sequence of decision vectors (x) representing the approximate Pareto front. \textbf{A Pareto front consists of 20 solutions.}\\
Format: SOLUTIONS\_BEGIN Sol0: <x\_tokens...> Sol1: <x\_tokens...> ... SOLUTIONS\_END

\medskip
\textbf{[User]} n=10 lower\_BEGIN <s1i871><d529><s1i384>\ldots\ lower\_END upper\_BEGIN <s1i344><d506><s0i335>\ldots\ upper\_END anchor1\_BEGIN <s1i599><d675><s1i220>\ldots\ anchor1\_END anchor2\_BEGIN <s1i424><d076><s1i171>\ldots\ anchor2\_END Q1\_BEGIN R0: <s0i004><d684><s0i000>\ldots\ R9: <s0i000><d445><s1i000>\ldots\ Q1\_END q1\_BEGIN <s0i212><d171><s0i145>\ldots\ q1\_END c1: <s0i000><d000> Q2\_BEGIN R0: <s0i006><d815><s1i000>\ldots\ R9: <s0i000><d407><s0i000>\ldots\ Q2\_END q2\_BEGIN <s0i224><d669><s0i024>\ldots\ q2\_END c2: <s0i000><d000> A\_BEGIN R0: <s1i001><d332><s0i002>\ldots\ R1: <s1i000><d116><s1i000>\ldots\ A\_END b\_BEGIN <s0i454><d673><s1i253><d267> b\_END

\medskip
\textbf{[Assistant]} SOLUTIONS\_BEGIN Sol0: <s1i599><d675><s1i220>\ldots\ Sol1: <s1i587><d303><s1i215>\ldots\ Sol2: <s1i576><d153><s1i213>\ldots\ \ldots\ Sol18: <s1i431><d023><s1i168>\ldots\ Sol19: <s1i424><d076><s1i171>\ldots\ SOLUTIONS\_END
}
\end{tcolorbox}

\noindent
Each scalar inside the user and assistant blocks is serialized as a discretized two-token pair \texttt{<sXiXX><dXXX>}. Loss is computed only on the assistant span, so the model learns to autoregressively produce the entire ordered frontier conditioned on the parsed problem fields.

\section{Additional Method Details}
\label{app:additional_method}

\subsection{Parameter-Efficient Adaptation Backbone}

We instantiate DIPS on a pretrained causal language model and adapt it with LoRA rather than full-parameter fine-tuning~\citep{hu2022lora}. Let \(W\in\mathbb{R}^{d_{\mathrm{out}}\times d_{\mathrm{in}}}\) denote a frozen pretrained projection matrix. LoRA replaces its update by a low-rank residual
\begin{equation}
W' = W + \Delta W,
\qquad
\Delta W = BA,
\label{eq:lora_update}
\end{equation}
where \(A\in\mathbb{R}^{r\times d_{\mathrm{in}}}\), \(B\in\mathbb{R}^{d_{\mathrm{out}}\times r}\), and \(r \ll \min(d_{\mathrm{in}}, d_{\mathrm{out}})\). Only \(A\) and \(B\) are trainable. All original backbone parameters remain frozen.

This choice is important for our setting for two reasons. First, the target task is not open-ended language generation but structured numerical sequence generation under a fixed serialization scheme, for which low-rank adaptation is a natural and efficient choice~\citep{hu2022lora}. Second, freezing the backbone stabilizes training when the vocabulary is expanded with numerical tokens and when auxiliary numerical losses are introduced in later curriculum stages.

Formally, for an instance \(p\), the prompt encoder \(\phi(p)\) produces the input text and the target frontier \(\mathcal{S}_p^\star=\{x_p^{(1)},\dots,x_p^{(K)}\}\) is serialized into a token sequence
\begin{equation}
y^\star = \mathrm{Enc}\!\left(\mathcal{S}_p^\star\right).
\end{equation}
The adapted model defines the autoregressive distribution
\begin{equation}
p_\theta(y^\star \mid \phi(p))
=
\prod_{t=1}^{T}
p_\theta(y_t^\star \mid y_{<t}^\star,\phi(p)),
\label{eq:ar_factorization_appendix}
\end{equation}
where \(\theta\) denotes the frozen backbone together with trainable LoRA parameters and the added numerical embeddings.

In our implementation, the tokenizer is expanded with the integer-prefix vocabulary
\(\mathcal{V}_{\mathrm{int}}\) and fractional-suffix vocabulary
\(\mathcal{V}_{\mathrm{frac}}\) introduced by the discretization scheme. These new embeddings are initialized by NGTI, while LoRA is used to adapt the transformer itself. This division of labor is deliberate: NGTI handles the \emph{input/output geometry} of new numerical symbols, while LoRA handles the \emph{sequence modeling adaptation} required to map problem descriptions to feasible Pareto frontiers.

\subsection{Theoretical Analysis of DIPS}
\label{app:theoretical_analysis}

This subsection provides a theoretical interpretation of why DIPS is a suitable representation and training framework for the constrained bi-objective convex problems studied in this paper. The analysis is not intended to prove generalization of a large language model. Instead, it establishes three structural facts that directly match the design of DIPS: the target Pareto frontier admits a stable ordered representation, the two-token discretization introduces only controlled numerical perturbation, and the TPCO objective minimizes a metric-aware surrogate for decision-space and objective-space error.

\paragraph{Regularity of the studied problem class.}
For an instance \(p\), recall that the feasible set is
\begin{equation}
\mathcal{X}_p
=
\{x\in\mathbb{R}^n \mid \ell_p \le x \le u_p,\; A_p x \le b_p\}.
\end{equation}
The box constraints make \(\mathcal{X}_p\) bounded, and the box and linear inequalities make it convex. In our data-generation procedure, instances are constructed to be feasible, so \(\mathcal{X}_p\) is non-empty and compact.

\begin{assumption}[Objective regularity]
\label{assump:regularity}
For each instance \(p\), the objectives \(f_{p,1}\) and \(f_{p,2}\) are continuously differentiable, convex, and strongly convex on a neighborhood of \(\mathcal{X}_p\). That is, for \(i\in\{1,2\}\), \(f_{p,i}\) is \(\mu_{p,i}\)-strongly convex with \(\mu_{p,i}>0\). Let
\[
\mu_p=\min\{\mu_{p,1},\mu_{p,2}\}>0.
\]
\end{assumption}

This assumption matches the five problem families used in our experiments. BOQP and SBQP use positive-definite or positive diagonal quadratic terms, while Ridge, Huber, and Softplus objectives include positive \(\ell_2\)-regularization. Hence, although the families differ in their objective parameterizations and solver backends, they share a common strongly convex bi-objective structure over a compact convex feasible region.

\paragraph{Stable Pareto-front path.}
Consider the standard weighted scalarization
\begin{equation}
F_{p,\lambda}(x)
=
\lambda f_{p,1}(x)
+
(1-\lambda)f_{p,2}(x),
\qquad
\lambda\in[0,1],
\end{equation}
and define the scalarized solution map
\begin{equation}
x_p(\lambda)
=
\arg\min_{x\in\mathcal{X}_p} F_{p,\lambda}(x).
\end{equation}
Since \(F_{p,\lambda}\) is strongly convex on a convex compact feasible set, \(x_p(\lambda)\) is unique for every \(\lambda\in[0,1]\). The following proposition shows that this solution map changes smoothly with the trade-off parameter.

\begin{proposition}[Lipschitz continuity of the Pareto solution path]
\label{prop:lipschitz_path}
Let Assumption~\ref{assump:regularity} hold, and define
\begin{equation}
G_p
=
\sup_{x\in\mathcal{X}_p}
\|\nabla f_{p,1}(x)-\nabla f_{p,2}(x)\|_2.
\end{equation}
Since \(\mathcal{X}_p\) is compact and the objectives are continuously differentiable, \(G_p<\infty\). Then, for any \(\lambda,\lambda'\in[0,1]\),
\begin{equation}
\|x_p(\lambda)-x_p(\lambda')\|_2
\le
\frac{G_p}{\mu_p}
|\lambda-\lambda'|.
\label{eq:lipschitz_path}
\end{equation}
Moreover, for every \(\lambda\in(0,1)\), \(x_p(\lambda)\) is Pareto optimal.
\end{proposition}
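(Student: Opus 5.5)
The plan is to use the variational inequality (first-order optimality) characterization of $x_p(\lambda)$ and combine it with strong monotonicity of the gradient of $F_{p,\lambda}$. Since $\mathcal{X}_p$ is convex and $F_{p,\lambda}$ is continuously differentiable, $x=x_p(\lambda)$ satisfies
\[
\langle \nabla F_{p,\lambda}(x), y-x\rangle \ge 0 \quad \text{for all } y\in\mathcal{X}_p .
\]
Moreover, $F_{p,\lambda}=\lambda f_{p,1}+(1-\lambda)f_{p,2}$ is a convex combination of a $\mu_{p,1}$- and a $\mu_{p,2}$-strongly convex function. It is therefore $(\lambda\mu_{p,1}+(1-\lambda)\mu_{p,2})$-strongly convex, hence at least $\mu_p$-strongly convex, uniformly in $\lambda\in[0,1]$ (on a neighborhood of $\mathcal{X}_p$, as in Assumption~\ref{assump:regularity}). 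Equivalently, its gradient is $\mu_p$-strongly monotone on $\mathcal{X}_p$.

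Fix $\lambda,\lambda'$ and write $x=x_p(\lambda)$, $x'=x_p(\lambda')$. Take $y=x'$ in the inequality for $x$ and $y=x$ in the inequality for $x'$, then add the two:
\[
\langle \nabla F_{p,\lambda}(x)-\nabla F_{p,\lambda'}(x'),\, x'-x\rangle \ge 0 .
\]
Next insert and subtract $\nabla F_{p,\lambda}(x')$. Strong monotonicity gives $\langle \nabla F_{p,\lambda}(x)-\nabla F_{p,\lambda}(x'), x-x'\rangle \ge \mu_p\|x-x'\|_2^2$. Combined with the summed inequality, this yields
\[
\mu_p\|x-x'\|_2^2 \le \langle \nabla F_{p,\lambda'}(x')-\nabla F_{p,\lambda}(x'),\, x-x'\rangle .
\]
The key identity is $\nabla F_{p,\lambda'}(x')-\nabla F_{p,\lambda}(x')=(\lambda'-\lambda)\big(\nabla f_{p,1}(x')-\nabla f_{p,2}(x')\big)$, whose norm is at most $|\lambda-\lambda'|\,G_p$ because $x'\in\mathcal{X}_p$. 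Applying Cauchy--Schwarz and dividing by $\|x-x'\|_2$ (the case $x=x'$ is trivial) gives \eqref{eq:lipschitz_path}. Finiteness of $G_p$ follows from continuity of the gradients on the compact set $\mathcal{X}_p$, as already noted in the statement.

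For Pareto optimality with $\lambda\in(0,1)$, I will use the standard contradiction argument. Suppose some feasible $x$ dominates $x_p(\lambda)$, i.e.\ $f_{p,i}(x)\le f_{p,i}(x_p(\lambda))$ for both $i$ with at least one strict inequality. Because both weights $\lambda$ and $1-\lambda$ are strictly positive, this gives $F_{p,\lambda}(x)<F_{p,\lambda}(x_p(\lambda))$, contradicting minimality. (Uniqueness of the minimizer would in fact give this even for $\lambda\in\{0,1\}$, but the statement only requires the open interval.)

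The main obstacle is the strong-monotonicity step. It requires that strong convexity hold on a convex set containing $\mathcal{X}_p$, so that the inequality $\langle\nabla f(x)-\nabla f(x'),x-x'\rangle\ge\mu\|x-x'\|^2$ is available for $x,x'\in\mathcal{X}_p$. Assumption~\ref{assump:regularity} supplies exactly this, since $\mathcal{X}_p$ itself is convex. Everything else is routine algebra.
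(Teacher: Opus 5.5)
Your proposal is correct and follows essentially the same route as the paper's proof. Both combine the two first-order optimality inequalities with $\mu_p$-strong monotonicity of $\nabla F_{p,\lambda}$, then use the identity $\nabla F_{p,\lambda'}(x')-\nabla F_{p,\lambda}(x')=(\lambda'-\lambda)\big(\nabla f_{p,1}(x')-\nabla f_{p,2}(x')\big)$ and Cauchy--Schwarz, and both prove Pareto optimality by the same positive-weight contradiction.
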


\begin{proof}
Let \(x=x_p(\lambda)\) and \(x'=x_p(\lambda')\). The first-order optimality conditions over the convex set \(\mathcal{X}_p\) give
\begin{equation}
\nabla F_{p,\lambda}(x)^\top(x'-x)\ge 0,
\qquad
\nabla F_{p,\lambda'}(x')^\top(x-x')\ge 0.
\end{equation}
Because \(F_{p,\lambda}\) is \(\mu_p\)-strongly convex,
\begin{equation}
\mu_p\|x-x'\|_2^2
\le
\big(
\nabla F_{p,\lambda}(x)-\nabla F_{p,\lambda}(x')
\big)^\top (x-x').
\end{equation}
Using the two optimality inequalities, we obtain
\begin{equation}
\mu_p\|x-x'\|_2^2
\le
\big(
\nabla F_{p,\lambda'}(x')
-
\nabla F_{p,\lambda}(x')
\big)^\top (x-x').
\end{equation}
Since
\begin{equation}
\nabla F_{p,\lambda'}(x')
-
\nabla F_{p,\lambda}(x')
=
(\lambda'-\lambda)
\big(
\nabla f_{p,1}(x')-\nabla f_{p,2}(x')
\big),
\end{equation}
Cauchy's inequality gives
\begin{equation}
\mu_p\|x-x'\|_2^2
\le
|\lambda-\lambda'|G_p\|x-x'\|_2.
\end{equation}
Dividing by \(\|x-x'\|_2\) yields Eq.~\eqref{eq:lipschitz_path}. Finally, if \(\lambda\in(0,1)\) and another feasible point strictly dominated \(x_p(\lambda)\), then it would strictly decrease the positive weighted sum \(F_{p,\lambda}\), contradicting optimality.
\end{proof}

Proposition~\ref{prop:lipschitz_path} explains why DIPS represents the target as an ordered sequence of frontier points rather than as an unstructured set. The target object induced by the optimization problem is a one-dimensional trade-off path with bounded variation in decision space.

\paragraph{Finite ordered frontier representation.}
DIPS trains the model to output a fixed-size ordered frontier
\[
\mathcal{S}_p^\star
=
\{x_p^{(1)},\ldots,x_p^{(K)}\},
\qquad K=20.
\]
The next result shows that such a finite representation has controlled approximation error for the scalarized Pareto path.

\begin{lemma}[Finite sampling error]
\label{lem:finite_sampling}
Let \(0=\lambda_1<\lambda_2<\cdots<\lambda_K=1\), and define
\[
h_K
=
\max_{k\in[K-1]}|\lambda_{k+1}-\lambda_k|.
\]
Then every point on the scalarized Pareto path is within distance \((G_p/\mu_p)h_K\) of one of the sampled frontier points:
\begin{equation}
\sup_{\lambda\in[0,1]}
\min_{k\in[K]}
\|x_p(\lambda)-x_p(\lambda_k)\|_2
\le
\frac{G_p}{\mu_p}h_K.
\end{equation}
For a uniform grid, this becomes
\begin{equation}
\sup_{\lambda\in[0,1]}
\min_{k\in[K]}
\|x_p(\lambda)-x_p(\lambda_k)\|_2
\le
\frac{G_p}{\mu_p(K-1)}.
\end{equation}
If the nearest grid point is used, the constant can be improved to \(G_p/(2\mu_p(K-1))\).
\end{lemma}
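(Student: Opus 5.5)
The argument is a direct consequence of Proposition~\ref{prop:lipschitz_path}. Fix any \(\lambda\in[0,1]\). Because the grid satisfies \(\lambda_1=0\) and \(\lambda_K=1\), the intervals \([\lambda_k,\lambda_{k+1}]\), \(k\in[K-1]\), cover \([0,1]\). Hence \(\lambda\) lies in one of them, say \(\lambda\in[\lambda_k,\lambda_{k+1}]\). Applying Eq.~\eqref{eq:lipschitz_path} with \(\lambda'=\lambda_k\) gives
\[
\min_{j\in[K]}\|x_p(\lambda)-x_p(\lambda_j)\|_2
\le\|x_p(\lambda)-x_p(\lambda_k)\|_2
\le\frac{G_p}{\mu_p}|\lambda-\lambda_k|
\le\frac{G_p}{\mu_p}h_K .
\]
The right-hand side does not depend on \(\lambda\), so taking the supremum over \(\lambda\in[0,1]\) yields the first claim.

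For the uniform grid \(\lambda_k=(k-1)/(K-1)\), every spacing equals \(1/(K-1)\), so \(h_K=1/(K-1)\). The second display then follows by substitution.

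For the refined constant, instead of the left endpoint I compare \(\lambda\) with the nearer of \(\lambda_k\) and \(\lambda_{k+1}\). That endpoint is at distance at most \((\lambda_{k+1}-\lambda_k)/2\le h_K/2\) from \(\lambda\). The same Lipschitz estimate then gives the bound \(G_p h_K/(2\mu_p)\), which equals \(G_p/(2\mu_p(K-1))\) on the uniform grid.

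There is no real obstacle: all the analytic content is in Proposition~\ref{prop:lipschitz_path}. The two things to check are that the minimum over \(k\) is bounded above by the choice of a single \(k\), and that the endpoint conditions \(\lambda_1=0\), \(\lambda_K=1\) guarantee every \(\lambda\in[0,1]\) is bracketed by consecutive grid points. Without those endpoint conditions, points of \([0,1]\) outside \([\lambda_1,\lambda_K]\) would not be covered.

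I would also remark that this covers only the scalarized path \(\{x_p(\lambda)\}\), as the statement says. It does not cover the full Pareto set, because under mere convexity of the objective image the weighted-sum parametrization may not reach every Pareto point.
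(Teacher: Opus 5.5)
Your proof is correct and is essentially the paper's argument: choose a grid point within $h_K$ of $\lambda$ (within $h_K/2$ for the nearest one) and apply Proposition~\ref{prop:lipschitz_path}. Your explicit use of $\lambda_1=0$ and $\lambda_K=1$ to guarantee that such a point exists is slightly more careful than the paper's one-line choice of $k$.

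Your closing aside is wrong in this setting, however. Under Assumption~\ref{assump:regularity}, every Pareto-optimal point minimizes some $F_{p,\lambda}$ with $\lambda\in[0,1]$: take a supporting hyperplane to the convex set $\{(f_{p,1}(x),f_{p,2}(x)) : x\in\mathcal{X}_p\}+\mathbb{R}^2_{+}$. Strong convexity makes that minimizer unique, so the scalarized path is the entire Pareto set. Weighted sums fail to reach Pareto points only when that set is nonconvex; mere convexity causes non-uniqueness of minimizers rather than unreachability.
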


\begin{proof}
For any \(\lambda\in[0,1]\), choose \(k\) such that \(|\lambda-\lambda_k|\le h_K\). Applying Proposition~\ref{prop:lipschitz_path} gives
\[
\|x_p(\lambda)-x_p(\lambda_k)\|_2
\le
\frac{G_p}{\mu_p}|\lambda-\lambda_k|
\le
\frac{G_p}{\mu_p}h_K.
\]
For a uniform grid, \(h_K=1/(K-1)\). If the nearest grid point is used, the maximum distance to the nearest grid point is \(1/(2(K-1))\).
\end{proof}

In our implementation, the supervision frontier is constructed by an \(\epsilon\)-constraint solver, followed by rounding, feasibility filtering, non-dominated filtering, and arc-length resampling in objective space. Lemma~\ref{lem:finite_sampling} should therefore be read as a structural justification rather than as an exact description of the data-generation algorithm: under the convex strongly regular regime considered here, the Pareto frontier has a stable one-dimensional geometry, and representing it by an ordered finite sequence is a well-posed learning target.

\paragraph{Controlled error from two-token discretization.}
DIPS converts each continuous scalar into a fixed-point two-token representation with four decimal places. Let
\[
\Delta=10^{-4}
\]
be the discretization precision, and let \(\Pi_\Delta(x)\) denote coordinate-wise rounding of \(x\) to the nearest \(\Delta\)-grid point. Then, for any \(x\in\mathbb{R}^n\),
\begin{equation}
\|\Pi_\Delta(x)-x\|_\infty
\le
\frac{\Delta}{2},
\qquad
\|\Pi_\Delta(x)-x\|_2
\le
\frac{\sqrt{n}\Delta}{2}.
\label{eq:rounding_decision_error}
\end{equation}

\begin{lemma}[Objective and feasibility perturbation under rounding]
\label{lem:rounding_error}
Suppose \(f_{p,i}\) is \(L_{p,i}\)-Lipschitz on a \(\Delta/2\)-neighborhood of \(\mathcal{X}_p\) with respect to the Euclidean norm. Then, for any \(x\in\mathcal{X}_p\),
\begin{equation}
|f_{p,i}(\Pi_\Delta(x))-f_{p,i}(x)|
\le
\frac{L_{p,i}\sqrt{n}\Delta}{2},
\qquad i\in\{1,2\}.
\label{eq:rounding_objective_error}
\end{equation}
Moreover, for each linear constraint row \(a_j^\top x\le b_j\),
\begin{equation}
a_j^\top\Pi_\Delta(x)
\le
b_j
+
\frac{\Delta}{2}\|a_j\|_1.
\label{eq:rounding_constraint_error}
\end{equation}
The box constraints satisfy the analogous coordinate-wise perturbation bound
\begin{equation}
\ell_{p,m}-\frac{\Delta}{2}
\le
\Pi_\Delta(x)_m
\le
u_{p,m}+\frac{\Delta}{2},
\qquad m=1,\ldots,n.
\end{equation}
\end{lemma}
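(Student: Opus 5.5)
The plan is to apply the two elementary rounding bounds in Eq.~\eqref{eq:rounding_decision_error} separately to each part of the statement. Throughout, fix \(x\in\mathcal{X}_p\) and write \(e=\Pi_\Delta(x)-x\). Coordinate-wise rounding gives \(|e_m|\le\Delta/2\) for every \(m\), so \(\|e\|_\infty\le\Delta/2\) and \(\|e\|_2\le\sqrt{n}\Delta/2\).

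\textbf{Objective bound.} I would use the Lipschitz hypothesis, and the first step is to confirm that the rounded point lies in the region where that hypothesis holds. The segment \([x,\Pi_\Delta(x)]\) lies within \(\ell_\infty\)-distance \(\Delta/2\) of \(x\in\mathcal{X}_p\). This is the one subtle point. If the ``\(\Delta/2\)-neighborhood'' is read in the Euclidean sense, then \(\|e\|_2\) can be as large as \(\sqrt{n}\Delta/2\), which exceeds \(\Delta/2\) when \(n>1\). I would resolve this by interpreting the neighborhood coordinate-wise, i.e.\ in \(\ell_\infty\), which is the natural reading given the rounding structure. Alternatively, one can simply note that each \(f_{p,i}\) in the five families is \(C^1\) on all of \(\mathbb{R}^n\), so that \(L_{p,i}\) may be taken as the supremum of \(\|\nabla f_{p,i}\|_2\) over the \(\ell_\infty\)-enlarged box, which is compact. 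With either reading, the Lipschitz inequality gives \(|f_{p,i}(\Pi_\Delta(x))-f_{p,i}(x)|\le L_{p,i}\|e\|_2\le L_{p,i}\sqrt{n}\Delta/2\), which is Eq.~\eqref{eq:rounding_objective_error}.

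\textbf{Linear constraints.} For each row \(j\), I would write \(a_j^\top\Pi_\Delta(x)=a_j^\top x+a_j^\top e\). Feasibility of \(x\) gives \(a_j^\top x\le b_j\). Hölder's inequality with the \((\ell_1,\ell_\infty)\) pair then gives \(a_j^\top e\le\|a_j\|_1\|e\|_\infty\le(\Delta/2)\|a_j\|_1\). Adding the two inequalities yields Eq.~\eqref{eq:rounding_constraint_error}.

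\textbf{Box constraints.} For each coordinate \(m\), I would use \(\ell_{p,m}\le x_m\le u_{p,m}\) together with \(|\Pi_\Delta(x)_m-x_m|\le\Delta/2\), which immediately gives the stated sandwich bound.

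The only step I expect to need care is the neighborhood-interpretation issue in the objective bound; every other step is a one-line inequality.
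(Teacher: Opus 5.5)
Your proposal is correct and follows the paper's proof essentially line for line: the Lipschitz bound combined with $\|\Pi_\Delta(x)-x\|_2\le\sqrt{n}\Delta/2$ for the objectives, the $(\ell_1,\ell_\infty)$ H\"older pairing plus feasibility of $x$ for each linear row, and the coordinate-wise rounding bound for the box constraints. Your remark about the neighborhood is a genuine refinement that the paper skips: its proof applies the Lipschitz inequality at $\Pi_\Delta(x)$ without checking that this point lies in the region where the hypothesis holds, and $\Pi_\Delta(x)$ can lie at Euclidean distance up to $\sqrt{n}\Delta/2$ from $\mathcal{X}_p$ (e.g.\ near a corner of the box), so reading the neighborhood in $\ell_\infty$ (or taking $L_{p,i}$ over the $\ell_\infty$-enlarged box) is what makes the argument airtight.
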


\begin{proof}
The decision-space bound follows directly from coordinate-wise rounding. The objective bound follows from Lipschitz continuity:
\[
|f_{p,i}(\Pi_\Delta(x))-f_{p,i}(x)|
\le
L_{p,i}\|\Pi_\Delta(x)-x\|_2
\le
\frac{L_{p,i}\sqrt{n}\Delta}{2}.
\]
For a linear constraint row \(a_j\),
\[
a_j^\top\Pi_\Delta(x)-a_j^\top x
=
a_j^\top(\Pi_\Delta(x)-x)
\le
\|a_j\|_1
\|\Pi_\Delta(x)-x\|_\infty
\le
\frac{\Delta}{2}\|a_j\|_1.
\]
Since \(a_j^\top x\le b_j\), Eq.~\eqref{eq:rounding_constraint_error} follows. The box-constraint statement follows coordinate-wise from Eq.~\eqref{eq:rounding_decision_error}.
\end{proof}

Lemma~\ref{lem:rounding_error} shows that the two-token fixed-point representation is not merely a formatting trick. It is a controlled projection of continuous Pareto solutions onto a finite numerical grid. In particular, four-decimal discretization introduces \(O(\sqrt n\,10^{-4})\) decision-space error and correspondingly bounded objective-space and constraint-residual perturbations. This justifies using discretized token sequences as the supervised targets for DIPS.

\paragraph{TPCO as metric-aware numerical risk.}
We next connect the TPCO objective to a formal numerical-risk criterion. Consider a numerical token position \(t\) in the assistant output. Under teacher forcing, the model predicts a distribution
\[
P_{\theta,t}(j)
=
p_\theta(j\mid y_{<t}^\star,\phi(p)).
\]
For integer-prefix positions \(t\in\mathcal{M}_{\mathrm{int}}\), let \(j_t^\star\) be the target token and let \(c(j)\) denote the coarse numerical value represented by integer-prefix token \(j\in\mathcal{V}_{\mathrm{int}}\). Define the ground metric
\begin{equation}
d_{\mathrm{c}}(j,j')
=
|c(j)-c(j')|.
\end{equation}
Since the target distribution is the Dirac mass \(\delta_{j_t^\star}\), the 1-Wasserstein distance from the model distribution to the target distribution is
\begin{equation}
W_1^{d_{\mathrm{c}}}
\big(
P_{\theta,t},
\delta_{j_t^\star}
\big)
=
\sum_{j\in\mathcal{V}_{\mathrm{int}}}
P_{\theta,t}(j)
|c(j)-c(j_t^\star)|.
\label{eq:coarse_wasserstein}
\end{equation}
This is exactly the per-position coarse loss used by TPCO.

Similarly, for fractional-suffix positions \(t\in\mathcal{M}_{\mathrm{frac}}\), let \(d(j)\) denote the fine numerical value represented by token \(j\in\mathcal{V}_{\mathrm{frac}}\), and define
\begin{equation}
d_{\mathrm{f}}(j,j')
=
|d(j)-d(j')|.
\end{equation}
Then
\begin{equation}
W_1^{d_{\mathrm{f}}}
\big(
P_{\theta,t},
\delta_{j_t^\star}
\big)
=
\sum_{j\in\mathcal{V}_{\mathrm{frac}}}
P_{\theta,t}(j)
|d(j)-d(j_t^\star)|,
\label{eq:fine_wasserstein}
\end{equation}
which is exactly the per-position fine loss.

\begin{proposition}[TPCO minimizes Wasserstein numerical risk]
\label{prop:tpco_wasserstein}
At every numerical position, the coarse and fine TPCO losses are the 1-Wasserstein distances from the model's predictive distribution to the target token distribution under the corresponding numerical ground metrics \(d_{\mathrm{c}}\) and \(d_{\mathrm{f}}\). Hence, TPCO is a metric-aware numerical-risk surrogate, whereas cross-entropy alone only matches token identity.
\end{proposition}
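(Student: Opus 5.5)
The argument is a direct computation: write down the 1-Wasserstein distance between an arbitrary distribution and a Dirac mass on a finite metric space, then match it term by term with Eqs.~\eqref{eq:coarse_loss} and~\eqref{eq:fine_loss}.

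\textbf{Step 1: Wasserstein distance to a point mass.} Fix a numerical position $t$ and a finite vocabulary $\mathcal{V}$, either $\mathcal{V}_{\mathrm{int}}$ or $\mathcal{V}_{\mathrm{frac}}$. Equip $\mathcal{V}$ with the ground metric $d_{\mathrm{c}}$ or $d_{\mathrm{f}}$. Strictly this is a pseudometric, but it becomes a metric once we note that $c$ and $d$ are injective on their vocabularies, since each token encodes a distinct value.

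For any probability vector $P$ on $\mathcal{V}$, a coupling $\gamma$ of $P$ and $\delta_{j^\star}$ must have second marginal concentrated on $j^\star$. So $\gamma(j,j')=0$ whenever $j'\neq j^\star$, and then the first marginal forces $\gamma(j,j^\star)=P(j)$. The coupling set is therefore the single product coupling $P\otimes\delta_{j^\star}$. The infimum defining $W_1$ is attained trivially, and
\[
W_1 = \sum_{j\in\mathcal{V}} P(j)\, d(j,j^\star).
\]
This gives Eqs.~\eqref{eq:coarse_wasserstein} and~\eqref{eq:fine_wasserstein}.

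\textbf{Step 2: matching with the TPCO losses.} Substitute $P=P_{\theta,t}$, with $c_t^\star=c(j_t^\star)$ and $d_t^\star=d(j_t^\star)$. The inner sums in $\mathcal{L}_{\mathrm{coarse}}$ and $\mathcal{L}_{\mathrm{fine}}$ are then exactly these Wasserstein distances. The one point to watch here is support. The losses sum only over $\mathcal{V}_{\mathrm{int}}$ or $\mathcal{V}_{\mathrm{frac}}$, while $P_{\theta,t}$ lives on the full vocabulary. I would handle this by stating the identity for the restriction of $P_{\theta,t}$ to the relevant numerical sub-vocabulary. This restriction is a sub-probability measure. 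Two options follow:
\begin{itemize}
\item normalize it, which multiplies the distance by its mass; or
\item interpret the leftover mass as contributing zero cost, which matches the loss as written.
\end{itemize}
Making this convention explicit is the only real obstacle. Either way, $\mathcal{L}_{\mathrm{coarse}}$ and $\mathcal{L}_{\mathrm{fine}}$ are averages over positions of per-position $W_1$ risks.

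\textbf{Step 3: contrast with cross-entropy.} Under the discrete metric $\mathbf{1}[j\neq j']$, the same formula gives $W_1 = 1-P(j^\star)$. Cross-entropy, $-\log P(j^\star)$, is likewise a function of $P(j^\star)$ alone. So any two predictive distributions with equal target mass have equal CE, regardless of how far their remaining mass lies from $c_t^\star$ or $d_t^\star$.

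I would exhibit two such distributions:
\begin{itemize}
\item one placing its off-target mass on the token adjacent to $j_t^\star$;
\item one placing it on a distant token.
\end{itemize}
These have identical CE but different $W_1^{d_{\mathrm{c}}}$. This establishes that TPCO is metric-aware while CE only matches token identity.
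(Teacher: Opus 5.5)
Your proposal is correct and matches the paper's proof. In both, the only coupling of $P$ with $\delta_{j^\star}$ moves all mass $P(j)$ to $j^\star$, which gives $W_1^d(P,\delta_{j^\star})=\sum_j P(j)\,d(j,j^\star)$; this is term by term the per-position coarse and fine loss. Your Step~2 convention for the mass of $P_{\theta,t}$ outside $\mathcal{V}_{\mathrm{int}}$ or $\mathcal{V}_{\mathrm{frac}}$, and your Step~3 observation that CE depends on $P_{\theta,t}(j_t^\star)$ alone, make explicit what the paper leaves implicit. Your injectivity aside is not quite right: \texttt{<s0i000>} and \texttt{<s1i000>} both encode coarse value $0$, so $d_{\mathrm{c}}$ is only a pseudometric. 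It is also unneeded, since the coupling computation holds for any nonnegative cost.
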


\begin{proof}
For any metric \(d\) over a discrete vocabulary and any Dirac target distribution \(\delta_{j^\star}\), the optimal transport plan from \(P\) to \(\delta_{j^\star}\) must move all probability mass \(P(j)\) at token \(j\) to \(j^\star\). Therefore,
\[
W_1^d(P,\delta_{j^\star})
=
\sum_j P(j)d(j,j^\star).
\]
Choosing \(d=d_{\mathrm{c}}\) gives Eq.~\eqref{eq:coarse_wasserstein}; choosing \(d=d_{\mathrm{f}}\) gives Eq.~\eqref{eq:fine_wasserstein}.
\end{proof}

This result clarifies why TPCO is better aligned with continuous optimization than pure cross-entropy. Under CE, all incorrect tokens are treated as mismatches to the one-hot target. A token representing a nearby numerical value and a token representing a distant numerical value are not distinguished by the geometry of the output space. TPCO instead imposes the correct metric structure on the numerical vocabulary: probability mass assigned to nearby numerical tokens is penalized less than probability mass assigned to distant tokens.

\paragraph{From token risk to decision-space error.}
The two-token fixed-point representation decomposes each scalar into an integer-prefix token and a fractional-suffix token. Let \(g(a,b)\) denote the scalar decoded from token pair \((a,b)\). Under the DIPS encoding, the decoded scalar is the sum of a coarse component and a fine component with the sign determined by the integer-prefix token. Under teacher forcing for the suffix position, the coarse sign and magnitude are fixed by the target prefix. Consequently, the one-step scalar prediction error is bounded by the sum of the coarse and fine numerical deviations:
\begin{equation}
|g(a,b)-g(a^\star,b^\star)|
\le
|c(a)-c(a^\star)|
+
|d(b)-d(b^\star)|.
\label{eq:scalar_error_decomposition}
\end{equation}
Summing Eq.~\eqref{eq:scalar_error_decomposition} over all scalar coordinates in the frontier gives a bound on the teacher-forced numerical risk of the decoded decision vectors.

Define the unnormalized numerical TPCO risk
\begin{equation}
\mathcal{R}_{\mathrm{num}}(\theta;p)
=
\sum_{t\in\mathcal{M}_{\mathrm{int}}}
W_1^{d_{\mathrm{c}}}
\big(
P_{\theta,t},
\delta_{y_t^\star}
\big)
+
\sum_{t\in\mathcal{M}_{\mathrm{frac}}}
W_1^{d_{\mathrm{f}}}
\big(
P_{\theta,t},
\delta_{y_t^\star}
\big).
\label{eq:unnormalized_num_risk}
\end{equation}
Let \(\mathcal{S}_{p,\Delta}^\star\) denote the discretized target frontier obtained after four-decimal rounding. Then, at the teacher-forced token-pair level,
\begin{equation}
\mathbb{E}
\left[
\|\hat{\mathcal{S}}_p-\mathcal{S}_{p,\Delta}^\star\|_{1,\mathrm{vec}}
\right]
\le
\mathcal{R}_{\mathrm{num}}(\theta;p),
\label{eq:decision_error_bound}
\end{equation}
where \(\|\cdot\|_{1,\mathrm{vec}}\) denotes the \(\ell_1\) norm after vectorizing all \(K\) decision vectors. If the averaged losses in the main TPCO objective are used instead of the unnormalized risk in Eq.~\eqref{eq:unnormalized_num_risk}, the same inequality holds up to the corresponding factors \(|\mathcal{M}_{\mathrm{int}}|\) and \(|\mathcal{M}_{\mathrm{frac}}|\).

Equation~\eqref{eq:decision_error_bound} is the central training-time interpretation of TPCO: reducing the coarse and fine numerical losses reduces an upper bound on the expected decision-space error of the decoded frontier relative to the discretized solver target.

\paragraph{From decision-space error to objective and feasibility error.}
The final evaluation metrics in this paper are computed after decoding decision vectors, checking feasibility, recomputing objective values, and applying non-dominated filtering. We therefore connect the token-level numerical risk to the two quantities that enter this evaluation pipeline: objective error and constraint violation.

Assume \(f_{p,i}\) is \(L_{p,i}^{(1)}\)-Lipschitz on the relevant compact region with respect to the \(\ell_1\) norm:
\begin{equation}
|f_{p,i}(x)-f_{p,i}(x')|
\le
L_{p,i}^{(1)}\|x-x'\|_1.
\end{equation}
Then, for each predicted solution \(\hat{x}\) and target discretized solution \(x_\Delta^\star\),
\begin{equation}
\mathbb{E}
\big[
|f_{p,i}(\hat{x})-f_{p,i}(x_\Delta^\star)|
\big]
\le
L_{p,i}^{(1)}
\mathbb{E}
\big[
\|\hat{x}-x_\Delta^\star\|_1
\big].
\label{eq:objective_from_decision}
\end{equation}
Combining Eq.~\eqref{eq:objective_from_decision} with Eq.~\eqref{eq:decision_error_bound} shows that TPCO controls an upper bound on objective-space error.

For feasibility, consider a linear constraint \(a_j^\top x\le b_j\). Since \(x_\Delta^\star\) is feasible after the filtering step in the supervision pipeline,
\[
a_j^\top x_\Delta^\star \le b_j.
\]
Therefore,
\begin{equation}
(a_j^\top\hat{x}-b_j)_+
\le
|a_j^\top(\hat{x}-x_\Delta^\star)|
\le
\|a_j\|_\infty
\|\hat{x}-x_\Delta^\star\|_1.
\label{eq:constraint_from_decision}
\end{equation}
Taking expectation and applying Eq.~\eqref{eq:decision_error_bound} gives
\begin{equation}
\mathbb{E}
\big[
(a_j^\top\hat{x}-b_j)_+
\big]
\le
\|a_j\|_\infty
\mathbb{E}
\big[
\|\hat{x}-x_\Delta^\star\|_1
\big].
\label{eq:feasibility_bound}
\end{equation}
The same argument applies to box constraints coordinate-wise.

\begin{corollary}[TPCO controls the errors entering frontier evaluation]
\label{cor:tpco_controls_metrics}
Under the Lipschitz conditions above, the numerical TPCO risk upper bounds the teacher-forced expected decision-space error of the decoded frontier. This, in turn, upper bounds the expected objective error and expected linear-constraint violation of the decoded solutions. Consequently, TPCO directly targets the quantities that affect feasibility, IGD\(^{+}\), and hypervolume after the deterministic decoding and filtering pipeline.
\end{corollary}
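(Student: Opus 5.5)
The plan is to chain the bounds already set up in this subsection, making each link explicit under teacher forcing. The argument has three steps.

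First, I establish the per-scalar decomposition Eq.~\eqref{eq:scalar_error_decomposition} carefully. I write each decoded scalar as $g(a,b)=s(a)\,(c(a)+d(b))$, with $c(a)$ the magnitude of the integer-plus-first-decimal part and $d(b)\in\{0,10^{-4},\ldots,999\cdot 10^{-4}\}$ the fine part. Under teacher forcing, the prefix used when predicting the suffix is the target prefix $a^\star$. The coarse position $t$ is scored by the prefix distribution, and the fine position $t+1$ is scored by the suffix distribution conditioned on $a^\star$. The pairwise error therefore splits into a coarse discrepancy and a fine discrepancy.

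This split is the step I expect to be the main obstacle, because of the sign. If $s(a)\ne s(a^\star)$, then $|g(a,b)-g(a^\star,b^\star)|$ can exceed $|c(a)-c(a^\star)|+|d(b)-d(b^\star)|$ when $c$ is read as an unsigned magnitude. I would resolve this by defining $c(j)$ as the \emph{signed} coarse value, which is what the coarse loss is measured against. I would also define $d$ on the suffix so that it contributes with the target sign, so that the fine part is measured in the sign fixed by the prefix. Then the triangle inequality gives the bound, and it holds in particular for the teacher-forced pair $(a, b)$ with $b$ predicted given $a^\star$. I would state that the corollary is understood for this teacher-forced pairing, as the statement says.

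Second, I take expectations. The prefix and suffix draws at each coordinate come from $P_{\theta,t}$ and $P_{\theta,t+1}$. By linearity and Proposition~\ref{prop:tpco_wasserstein}, the expected coarse deviation is exactly $W_1^{d_{\mathrm{c}}}(P_{\theta,t},\delta_{y_t^\star})$, and the same holds for the fine deviation. Summing over all $Kn$ coordinates gives Eq.~\eqref{eq:decision_error_bound}: the expected vectorized $\ell_1$ decision error relative to $\mathcal{S}^\star_{p,\Delta}$ is at most $\mathcal{R}_{\mathrm{num}}(\theta;p)$. The averaged TPCO losses enter through the multiplicative factors $|\mathcal{M}_{\mathrm{int}}|$ and $|\mathcal{M}_{\mathrm{frac}}|$.

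Third, I pass to objectives and constraints. For each solution $k$, $\mathbb{E}\|\hat x^{(k)}-x^{\star(k)}_\Delta\|_1$ is bounded by the full vectorized expectation. Eq.~\eqref{eq:objective_from_decision}, using $\ell_1$-Lipschitzness on a compact region containing both the decoded vectors and the targets, then gives the objective error bound $L^{(1)}_{p,i}\mathcal{R}_{\mathrm{num}}$. For constraints, the targets are feasible after the filtering in Appendix~\ref{app:solver_settings}, so Hölder's inequality gives Eq.~\eqref{eq:feasibility_bound}, and hence a constraint-violation bound $\|a_j\|_\infty\mathcal{R}_{\mathrm{num}}$. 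Box constraints follow coordinatewise with coefficient $1$.

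The compact region needed for the Lipschitz constants is valid because every decodable token pair lies in $[-99.9999,99.9999]$, which is a bounded set. This is where the discretization enters.

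The final sentence of the corollary, about feasibility, IGD$^+$ and hypervolume, is interpretive. I would justify it by noting two facts. The feasibility indicator with tolerance $\tau$ fails only when a violation exceeds $\tau$, so Markov's inequality bounds the infeasibility probability by $\|a_j\|_\infty\mathcal{R}_{\mathrm{num}}/\tau$. IGD$^+$ is $1$-Lipschitz in the normalized predicted objective vectors, so its excess is controlled by the objective errors divided by $z^{\mathrm{nadir}}-z^{\mathrm{ideal}}$.
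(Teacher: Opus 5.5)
Your three steps reproduce the paper's own justification for this corollary: Eq.~\eqref{eq:scalar_error_decomposition}, then the Dirac-target identity of Proposition~\ref{prop:tpco_wasserstein} plus linearity to get Eq.~\eqref{eq:decision_error_bound}, then $\ell_1$-Lipschitzness and H\"older for Eqs.~\eqref{eq:objective_from_decision} and~\eqref{eq:feasibility_bound}. Using the bounded token range to supply the compact region is a good addition.

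The genuine gap is the step you flagged yourself, and your patch does not close it. The parser computes $g(a,b)=s(a)\,(m(a)+d(b))$, where $m(a)$ is the unsigned coarse magnitude. With the signed coarse value $c(a)=s(a)m(a)$, a sign mismatch gives $|g(a,b)-g(a^\star,b^\star)|=|c(a)-c(a^\star)|+d(b)+d(b^\star)$. This exceeds your right-hand side by $2\min\{d(b),d(b^\star)\}$, so signing $c$ is not enough. Letting the fine part carry the target sign does make the triangle inequality true. However, it does so for the hybrid scalar $c(a)+s(a^\star)d(b)$, which is not what $\psi_p$ returns whenever $s(a)\neq s(a^\star)$.

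This is not a constant-factor issue. Take $a=\texttt{<s1i000>}$, $a^\star=\texttt{<s0i000>}$, $b=b^\star=\texttt{<d999>}$. Both TPCO terms vanish, because $c(a)=c(a^\star)=0$, yet the parsed values are $-0.0999$ and $0.0999$. So no bound $\mathbb{E}\|\hat x-x^\star_\Delta\|_1\le C\,\mathcal{R}_{\mathrm{num}}(\theta;p)$ can hold for the real decoder with this ground metric. Your Step~3 and your Markov and IGD$^+$ remarks are therefore about the hybrid vectors, not the solutions that the feasibility check, IGD$^+$ and HV actually score. The paper's sentence that under teacher forcing ``the coarse sign and magnitude are fixed by the target prefix'' makes the same silent substitution. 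You are following its argument faithfully, gap included.

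To get a statement about parsed solutions you need one of two repairs. The first is an explicit sign term, for example $|g(a,b)-g(a^\star,b^\star)|\le 3|c(a)-c(a^\star)|+|d(b)-d(b^\star)|+0.2\cdot\mathbf{1}\big[\{a,a^\star\}=\{\texttt{<s0i000>},\texttt{<s1i000>}\}\big]$; TPCO does not control the last term. The second is a coarse ground metric that separates the two zero-magnitude prefixes, such as the signed interval midpoint $\tilde c(j)=s(j)\,(m(j)+0.04995)$. This yields $|g(a,b)-g(a^\star,b^\star)|\le 2|\tilde c(a)-\tilde c(a^\star)|+|d(b)-d(b^\star)|$, after which your chain goes through with a factor $2$ on the coarse risk.

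Two smaller points concern your feasibility remark. First, the targets are feasible only up to \texttt{feas\_tol}$=10^{-4}$, and by Lemma~\ref{lem:rounding_error} rounding a feasible point can already violate a row by up to $\tfrac{\Delta}{2}\|a_j\|_1$. So $(a_j^\top\hat x-b_j)_+$ picks up an additive slack that, divided by $\tau=5\times10^{-5}$, can make your Markov bound vacuous. Second, the point-level infeasibility probability needs a union bound over all constraint rows and box coordinates.
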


\paragraph{Interpretation for DIPS.}
The theoretical picture is therefore aligned with the empirical design of DIPS. Strong convexity and compact convex feasibility imply that the Pareto frontier has a stable one-dimensional trade-off structure, motivating the ordered \(K=20\) frontier representation. The two-token fixed-point discretization introduces only bounded decision, objective, and constraint perturbations, making continuous frontiers compatible with autoregressive token generation. Finally, TPCO replaces pure token matching with Wasserstein numerical supervision over the discretized vocabulary, thereby controlling a surrogate for decision-space error, which propagates to objective accuracy and feasibility.

This analysis also explains the staged curriculum. The CE phase first learns the structural serialization of the frontier, including delimiters, solution indices, and the placement of numerical tokens. Once the output format is stable, the coarse numerical phase moves probability mass toward the correct sign, magnitude, and first decimal digit. The fine phase then refines the remaining digits. Thus, the curriculum follows the same hierarchy as the DIPS representation:
\[
\text{frontier structure}
\;\longrightarrow\;
\text{coarse decision geometry}
\;\longrightarrow\;
\text{fine numerical precision}.
\]
Together, these results show that DIPS is not only an empirical adaptation of LLMs to continuous optimization, but also a representation and training framework whose components are mathematically aligned with the geometry of constrained bi-objective convex Pareto-front generation.

\subsection{Inference-Time Frontier Construction}

Inference follows the MPPF procedure described in the main text, but here we state the full construction explicitly. For each instance \(p\), the model is decoded \(N\) times, producing candidate frontiers
\begin{equation}
\hat{\mathcal{S}}_p^{(1)},\hat{\mathcal{S}}_p^{(2)},\dots,\hat{\mathcal{S}}_p^{(N)}.
\end{equation}
Their union defines the raw candidate pool
\begin{equation}
\mathcal{C}_p
=
\bigcup_{i=1}^{N}\hat{\mathcal{S}}_p^{(i)}.
\label{eq:candidate_pool_appendix}
\end{equation}

Each candidate in $\mathcal{C}_p$ is first rounded to four decimals to align with the discretization precision, and its objective vector is recomputed exactly from the rounded decision vector so that the downstream filters use objective values that are consistent with the decoded solutions. We then apply four deterministic filters in order.

\paragraph{(i) Decision-space deduplication.}
Candidates whose rounded decision vectors coincide are merged, since multiple inference passes can re-emit identical solutions:
\begin{equation}
\mathcal{C}_p^{\mathrm{x}}
=
\big\{\, \hat{x} \in \mathcal{C}_p \,\big|\, \hat{x} \text{ is the first occurrence of its rounded decision vector} \,\big\}.
\end{equation}

\paragraph{(ii) Objective-space deduplication.}
Among the remaining candidates, those whose objective vectors lie within a small tolerance $\tau_{\mathrm{obj}}$ of an already-kept point in objective space are dropped:
\begin{equation}
\mathcal{C}_p^{\mathrm{xf}}
=
\big\{\, \hat{x} \in \mathcal{C}_p^{\mathrm{x}} \,\big|\,
\big\| F(\hat{x}) - F(\hat{x}') \big\|_2 > \tau_{\mathrm{obj}}
\text{ for every } \hat{x}' \text{ already kept} \,\big\},
\end{equation}
where $F(\hat{x}) = (f_{p,1}(\hat{x}), f_{p,2}(\hat{x}))$. This step removes near-duplicates that are not caught by decision-space matching alone.

\paragraph{(iii) Feasibility filter.}
Infeasible candidates are removed under the same tolerance $\tau$ used in evaluation:
\begin{equation}
\mathcal{C}_p^{\mathrm{feas}}
=
\left\{
\hat{x}\in\mathcal{C}_p^{\mathrm{xf}} \mid \hat{x}\in\mathcal{X}_p^{\tau}
\right\}.
\end{equation}

\paragraph{(iv) Non-dominated sort with multi-front fallback.}
The remaining feasible candidates are mapped to objective space and sorted into non-dominated fronts $\mathcal{F}_p^{(1)}, \mathcal{F}_p^{(2)}, \ldots$, where $\mathcal{F}_p^{(1)}$ is the Pareto front and $\mathcal{F}_p^{(j+1)}$ is the Pareto front of the candidates not in $\mathcal{F}_p^{(1)} \cup \cdots \cup \mathcal{F}_p^{(j)}$. The final selection is performed front-by-front: starting from $\mathcal{F}_p^{(1)}$, we keep representatives via arc-length-balanced selection in normalized objective space until $K$ solutions have been collected, falling back to deeper fronts only when the upper fronts contain fewer than $K$ points:
\begin{equation}
\hat{\mathcal{S}}_p^{\mathrm{final}}
=
\mathrm{ArcSelect}\!\left(
\mathcal{F}_p^{(1)} \cup \mathcal{F}_p^{(2)} \cup \cdots,\, K
\right).
\label{eq:arcselect_appendix}
\end{equation}

This preserves frontier coverage more faithfully than simply taking the top-\(K\) points under one scalarization, while the fallback ensures that exactly $K$ feasible solutions are returned even when the first front is sparse. In practice, when the model is well-trained the first front already contains $\ge K$ points and the fallback is rarely invoked.

MPPF can be viewed as a lightweight search layer on top of the amortized solver, conceptually related to test-time sampling, self-consistency, and reranking strategies that aggregate multiple candidate generations~\citep{chen2023usc,wan2025reasoning_aware_sc,chen2024self_para_consistency,jinnai2024mbr_diverse,requeima2024llm_processes}. A single pass already yields a structured frontier prediction, but repeated decoding exposes complementary local modes of the model's output distribution. The four-step procedure above converts this stochastic diversity into better set-level coverage, which is precisely what is measured by hypervolume and IGD\(^{+}\).

Finally, note that MPPF does not change the trained parameters and introduces no external optimizer at test time. It remains an end-to-end neural inference procedure: all candidate points originate from the LLM itself, and post-processing is limited to rounding, deduplication, feasibility checking, non-dominated sorting, and deterministic frontier resampling.

\section{Experiment Settings}
\label{app:exp_settings}

The backbone model used in all experiments is Qwen2.5-7B-Instruct. We fine-tune the model with a maximum context length of 12{,}000 tokens and use bfloat16 training by default. For parameter-efficient fine-tuning, both the LoRA rank and scaling factor are set to 64, with a LoRA dropout of 0.05. LoRA is applied to the attention and MLP projection layers, including \texttt{q\_proj}, \texttt{k\_proj}, \texttt{v\_proj}, \texttt{o\_proj}, \texttt{gate\_proj}, \texttt{up\_proj}, and \texttt{down\_proj}, and the bias option is set to \texttt{lora\_only}. We enable gradient checkpointing during training and apply gradient clipping with \(\texttt{max\_grad\_norm}=1.0\) for stability. Optimization is performed with AdamW (\texttt{adamw\_torch}) using a learning rate of \(2\times 10^{-5}\), weight decay \(0.01\), a linear learning-rate scheduler, and 20 warmup steps. The default per-device batch size is 1, with gradient accumulation steps set to 1, and the model is trained for 1 epoch. For the proposed curriculum objective, we set \(\texttt{phase1\_ratio}=0.15\), \(\texttt{phase2\_ratio}=0.50\), \(\lambda_{\min}^{\mathrm{CE}}=0.4\), \(\beta_{\max}=1.0\), and \(\gamma_{\max}=0.5\). In our implementation, the tokenizer is expanded with the proposed discretized numerical vocabulary, and the added integer-prefix and fractional-suffix tokens are used together with the NGTI initialization strategy. Unless otherwise stated, training uses the chat-style \texttt{messages} format, where loss is computed only on the assistant response span with the response prefix \texttt{<|im\_start|>assistant\textbackslash n}; the codebase also supports an Alpaca-style format as an alternative option. The training set is shuffled with random seed 42, and all sequences are truncated to the maximum sequence length without padding at tokenization time. Checkpoints are saved every 10{,}000 steps, with at most 3 checkpoints retained. The model is trained on 4 NVIDIA A100-SXM4-80GB GPUs on a server equipped with two Intel Xeon Platinum 8362 CPUs at 2.80 GHz.

\section{Baseline Details}
\label{app:baseline_details}

This appendix describes how each baseline method in Section~\ref{sec:baselines} is adapted to our constrained bi-objective continuous setting. Because all baselines must produce a set of $K=20$ feasible decision vectors approximating the Pareto frontier, every method is configured to emit such a set rather than a single solution.

\paragraph{General-purpose and reasoning LLMs.}
All general-purpose and reasoning baselines listed in Table~\ref{tab:api_endpoints} are queried under the same prompting protocol. The prompt and input content are identical to the training data, following Appendix~\ref{app:prompt_template}, with the only difference that the discretized tokens used during training are decoded back into plain numbers, since the models have not learned these special tokens. Predicted vectors are extracted using a deterministic regex parser; malformed outputs count as parse failures and contribute to the empty-feasible-set rate. Reasoning models are allowed their default extended chain-of-thought prior to producing the final frontier output. The exact API model identifier used for each baseline and the base URL for invoking the corresponding model are listed in Table~\ref{tab:api_endpoints}. 

\begin{table}[h]
\centering
\caption{API model identifiers and corresponding base URLs used for the general and reasoning LLM baselines.}
\label{tab:api_endpoints}
\small
\setlength{\tabcolsep}{4pt}
\renewcommand{\arraystretch}{1.05}
\resizebox{\linewidth}{!}{%
\begin{tabular}{lll}
\toprule
Display name & API identifier  & Base URL \\
\midrule
\multicolumn{3}{l}{\textit{General-purpose}} \\
GPT-5.2 Instant      & \texttt{gpt-5.2-chat-latest}        & \texttt{https://api.openai.com/v1} \\
Claude Sonnet 4.5    & \texttt{claude-sonnet-4-5-20250929} & \texttt{https://api.anthropic.com/v1} \\
Claude Haiku 4.5     & \texttt{claude-haiku-4-5-20251001}  & \texttt{https://api.anthropic.com/v1} \\
Gemini 2.5 Flash     & \texttt{gemini-2.5-flash}           & \texttt{https://generativelanguage.googleapis.com/v1beta} \\
DeepSeek-V3 671B     & \texttt{deepseek-chat}              & \texttt{https://api.deepseek.com} \\
Qwen3-235B           & \texttt{qwen3-235b-a22b-instruct-2507}   & \texttt{https://dashscope.aliyuncs.com/compatible-mode/v1} \\
Llama-3.3-70B        & \texttt{llama-3.3-70b-versatile}     & \texttt{https://api.groq.com/openai/v1} \\
Qwen2.5-72B-Instruct & \texttt{Qwen2.5-72B-Instruct}       & \texttt{https://dashscope.aliyuncs.com/compatible-mode/v1} \\
\midrule
\multicolumn{3}{l}{\textit{Reasoning}} \\
GPT-5.2 Thinking          & \texttt{gpt-5.2}                   & \texttt{https://api.openai.com/v1} \\
Claude Haiku 4.5 Thinking & \texttt{claude-haiku-4-5-20251001} & \texttt{https://api.anthropic.com/v1} \\
DeepSeek-R1               & \texttt{deepseek-reasoner}         & \texttt{https://api.deepseek.com} \\
Qwen3-Thinking            & \texttt{qwen3-235b-a22b-thinking-2507}    & \texttt{https://dashscope.aliyuncs.com/compatible-mode/v1} \\
\bottomrule
\end{tabular}%
}
\end{table}

\paragraph{Optimization by PROmpting (OPRO)}
We initialize OPRO~\citep{yang2024opro} with $5$ randomly sampled feasible solutions and ask the LLM to propose better candidates than all previously seen ones. The prompt explicitly displays the previous decision vectors and their objective values, sorted in descending order, and requests the next solution in the same format. Across $4$ outer iterations we accumulate the union of LLM-proposed solutions, then return the non-dominated front of the feasible subset, resampled to $K$ points by arc-length selection in objective space.

\paragraph{PHP.}
For Progressive-Hint Prompting~\citep{zheng2023php}, we run $4$ progressive rounds starting from a random feasible candidate. After each round, the prompt is updated with a hint reflecting whether the previous output was feasible, whether it dominated the current best, and how far it deviates from a single-objective reference. Final reporting follows the same arc-length resampling procedure as OPRO.

\paragraph{LMEA.}
LMEA~\citep{liu2024lmea} treats the LLM as an evolutionary operator. We use a population size of $5$ and $3$ outer generations. The crossover prompt asks the LLM to combine two parent decision vectors by mixing segments, and the mutation prompt asks for a small perturbation that preserves feasibility. Selection is performed by non-dominated sort with crowding-distance tie-breaking. The final population's feasible non-dominated subset is resampled to $K$ points.

\paragraph{SGE.}
SGE~\citep{iklassov2024sge} prompts the LLM to write Python code implementing a heuristic or metaheuristic for the given problem. We instruct the LLM to use only NumPy/SciPy and standard libraries, and execute the generated code in a sandboxed environment with the same per-instance budget as the matched-budget classical comparison. If the generated code runs successfully, we collect its output points; if execution fails, all $K$ slots are recorded as infeasible.

\paragraph{Classical optimizers.}
The classical baselines (Weighted Sum, NBI, NC, NSGA-II, MOEA/D) are implemented on top of pymoo (for evolutionary methods) and Gurobi/MOSEK (for scalarization-based methods, using the same backend solvers as our reference frontier construction). For matched-budget runs, each method is given the same per-instance wall-clock budget as DIPS+MPPF; for full-budget runs, each method runs until its default convergence criterion (Appendix~\ref{app:fullbudget_classical}).

\section{Additional Experimental Details}
\label{app:extra}

This appendix collects the supplementary experiments referenced from the main text. Subsections~\ref{app:solving_time}--\ref{app:reasoning_trace} characterize baseline behavior and inference cost; Subsection~\ref{app:fullbudget_classical} extends the comparison with classical optimizers to unrestricted compute; Subsections~\ref{app:anchor_ablation}--\ref{app:training_stability} provide design ablations and stability analyses; and Subsections~\ref{app:unified_solver}--\ref{app:per_family_results} report fine-grained generalization results.

\subsection{Per-Family Inference Time}
\label{app:solving_time}

This appendix reports per-family inference time, measured as the wall-clock cost from problem input to the final $K=20$ decision vectors, complementing the aggregated ``Avg.\ Time'' column in Table~\ref{tab:main_results}.

\begin{table}[h]
\centering
\caption{Per-family inference time of all baseline methods, in seconds (s) or minutes (m).}
\label{tab:solving_time_app}
\small
\setlength{\tabcolsep}{4pt}
\begin{tabular}{lccccc}
\toprule
Method & SBQP & BOQP & Ridge & Huber & Softplus \\
\midrule
\multicolumn{6}{l}{\textit{General-purpose Language Models}} \\
GPT-5.2 Instant            & 6.4s  & 6.5s  & 5.9s  & 6.7s  & 5.7s \\
Claude Sonnet 4.5          & 7.6s  & 7.4s  & 7.1s  & 7.8s  & 7.0s \\
Claude Haiku 4.5           & 9.3s  & 9.1s  & 8.8s  & 9.4s  & 8.7s \\
Gemini 2.5 Flash           & 5.9s  & 6.1s  & 5.5s  & 6.3s  & 5.4s \\
DeepSeek-V3 671B           & 26.7s & 27.3s & 24.9s & 28.1s & 24.4s \\
Qwen3-235B                 & 15.0s & 15.2s & 13.9s & 15.7s & 13.5s \\
Llama-3.3-70B              & 2.9s  & 2.8s  & 2.6s  & 3.0s  & 2.5s \\
Qwen2.5-72B                & 13.5s & 13.7s & 12.5s & 14.0s & 12.1s \\
\midrule
\multicolumn{6}{l}{\textit{Reasoning Models}} \\
GPT-5.2 Thinking           & 1.7m  & 1.9m  & 1.6m  & 2.0m  & 1.6m \\
Claude Haiku 4.5 Thinking  & 2.3m  & 2.5m  & 2.2m  & 2.6m  & 2.2m \\
DeepSeek-R1                & 5.5m  & 6.0m  & 5.2m  & 6.4m  & 5.2m \\
Qwen3-Thinking             & 4.1m  & 4.5m  & 3.9m  & 4.6m  & 3.7m \\
\midrule
\multicolumn{6}{l}{\textit{Prompt Strategies}} \\
OPRO & 2.2m & 2.4m & 2.1m & 2.5m & 2.1m \\
PHP  & 1.6m & 1.8m & 1.5m & 1.9m & 1.5m \\
LMEA & 5.2m & 5.6m & 5.1m & 5.9m & 5.0m \\
SGE  & 3.7m & 4.0m & 3.5m & 4.1m & 3.5m \\
\bottomrule
\end{tabular}
\end{table}

General-purpose LLMs take seconds per instance, while reasoning models and prompt strategies require minutes due to extended chain-of-thought or repeated LLM calls; LMEA and SGE are the costliest among prompt strategies, as each instance triggers multiple LLM calls.

For DIPS, the cost depends on the number of MPPF passes $N$. Table~\ref{tab:dips_solving_time_scaling} reports timings for $N \in \{1, 2, 4, 8, 16\}$, where $N=1$ is the single-pass DIPS variant and $N \ge 2$ corresponds to DIPS+MPPF.

\begin{table}[h]
\centering
\caption{Per-family inference time of DIPS across MPPF passes $N$, including both autoregressive decoding and candidate fusion.}
\label{tab:dips_solving_time_scaling}
\small
\setlength{\tabcolsep}{6pt}
\begin{tabular}{cccccc}
\toprule
$N$ & SBQP & BOQP & Ridge & Huber & Softplus \\
\midrule
1   & 0.12s & 0.16s & 0.18s & 0.18s & 0.18s \\
2   & 0.22s & 0.27s & 0.30s & 0.30s & 0.30s \\
4   & 0.41s & 0.49s & 0.52s & 0.52s & 0.52s \\
8   & 0.79s & 0.91s & 0.97s & 0.96s & 0.97s \\
16  & 1.63s & 1.81s & 1.85s & 1.85s & 1.83s \\
\bottomrule
\end{tabular}
\end{table}

The cost scales near-linearly in $N$: each pass is an independent autoregressive decoding, and fusion adds only minor overhead (rounding, deduplication, non-dominated sort, and arc-length resampling). Even at $N=16$, DIPS+MPPF completes in under $1.9$ seconds per instance on every family, an order of magnitude faster than the cheapest LLM-based baseline.

\subsection{Example Reasoning Trace}
\label{app:reasoning_trace}

To illustrate why reasoning models struggle on constrained bi-objective continuous Pareto-front generation, we examine the chain-of-thought (CoT) produced by DeepSeek-R1 on a representative SBQP instance with $n=10$ decision variables, where the prompt requests $K=20$ feasible Pareto-optimal vectors and supplies two anchor points. The full CoT exceeds $9{,}000$ tokens, the bulk of which is spent enumerating algorithmic options the model cannot execute and verifying anchor feasibility component by component. We condense it to its key inflection points below, using ellipses (\ldots) for omitted content while preserving the original exploratory and ineffective style.

\begin{quote}\small\itshape
``We are given a bi-objective quadratic problem $\min_x (f_1(x),f_2(x))$ with $f_i(x)=\tfrac{1}{2}x^\top Q_i x + q_i^\top x + c_i$, subject to box bounds and two linear inequalities $Ax\le b$. Two heuristic anchor points near the individual minima are provided\ldots

The natural approach is a weighted scalarization: minimize $wf_1+(1-w)f_2$, which is itself a QP with $Q_w=wQ_1+(1-w)Q_2$ and $q_w=wq_1+(1-w)q_2$. We could sweep $w\in\{0,0.1,\ldots,1\}$\ldots however, solving each such QP requires writing out KKT conditions and inverting $Q_w$, and we are in a text-based environment without an optimization solver. The $\varepsilon$-constraint formulation faces the same obstacle\ldots

Actually, since $Q_1,Q_2$ are diagonal here, $f_1$ and $f_2$ are separable, so the unconstrained minimum of $f_1$ is $x_i^\star=-(q_1)_i/(Q_1)_{ii}$ component-wise. But this may violate the bounds, and once the linear constraints couple the variables we lose separability anyway\ldots

Without a solver we must rely on the anchors. The feasible set is the intersection of a hyperrectangle with two linear half-spaces, hence convex; so any convex combination of two feasible anchors is itself feasible---no clipping required\ldots

But first I should verify that the anchors themselves are feasible. For anchor~1: $x_0=9.1688\in[8.4136,19.7511]$, OK. $x_2=-34.4169$, bounds $[-75.9814,-23.8100]$\ldots $-34.4169<-23.8100$, does that put it below the upper bound? Wait, $-34.4169$ is more negative, so the inequality $-75.9814\le-34.4169\le-23.8100$ does hold\ldots let me re-check\ldots yes, valid. $x_3=-62.6036\in[-99,-34.3284]$\ldots $-62.6036$ is less than $-34.3284$, so within range\ldots [continues element by element through both anchors and both linear constraints]\ldots

Plan: emit the two anchors as solutions~1 and 2, and three convex combinations of the anchors with mixing weights $\{0.25,0.5,0.75\}$ as solutions~3--5. By convexity these are guaranteed feasible\ldots''
\end{quote}

The reasoning never reconciles this five-point plan with the requested $K=20$ output. In the final emission the model pads the five planned vectors with small random perturbations to reach twenty candidates: only $6$ remain feasible after the perturbations push points outside the box, and their objective values cluster along a one-dimensional segment rather than tracing the Pareto front. The bulk of the inference budget is consumed by element-wise bound checks and by enumerating algorithmic options the model cannot actually execute, rather than by producing diverse Pareto-optimal vectors. This illustrates that step-by-step reasoning, though effective on single-answer tasks, scales poorly to set-valued continuous outputs: the model has no internal QP solver, and the convex-combination fallback it converges to is structurally incapable of recovering the Pareto-front geometry.

\subsection{Full-Budget Classical Optimizer Comparison}
\label{app:fullbudget_classical}

This appendix complements the matched-budget study in Section~\ref{sec:classical_reference} by lifting the per-instance time cap, with the aim of measuring the wall-clock cost each method requires to reach its own convergence regime. To this end, each classical baseline is configured with hyperparameters chosen to drive it to a fully converged, high-quality frontier. NSGA-II uses a population size of $300$ and runs for $500$ generations, with simulated binary crossover ($p_c = 0.9$) and polynomial mutation ($p_m = 1/n$). MOEA/D shares the same population size and generation budget, and adopts the Tchebycheff decomposition with a neighborhood size of $20$. For Weighted Sum, NBI, and NC, we formulate $N=320$ scalar subproblems on uniformly spaced weights or reference points and solve each subproblem with Gurobi/MOSEK to its default convergence tolerance. The resulting $320$ candidate solutions are then passed through the same post-processing as MPPF (rounding, deduplication, feasibility filtering, non-dominated sorting, and arc-length resampling) to select the final solution set of $K=20$ Pareto solutions. DIPS+MPPF is reported at $N \in \{1, 2, 4, 8, 16\}$ inference passes, with $N=1$ corresponding to single-pass DIPS.

\begin{table}[h]
\centering
\caption{Full-budget comparison with classical multi-objective optimizers across the five problem families. }
\label{tab:fullbudget_classical}
\scriptsize
\setlength{\tabcolsep}{2pt}
\renewcommand{\arraystretch}{0.95}
\resizebox{\textwidth}{!}{%
\begin{tabular}{lccc ccc ccc ccc ccc c}
\toprule
Method
& \multicolumn{3}{c}{SBQP}
& \multicolumn{3}{c}{BOQP}
& \multicolumn{3}{c}{Ridge}
& \multicolumn{3}{c}{Huber}
& \multicolumn{3}{c}{Softplus}
& \begin{tabular}{c}Avg.\\ Time\end{tabular} \\
\cmidrule(lr){2-4} \cmidrule(lr){5-7} \cmidrule(lr){8-10} \cmidrule(lr){11-13} \cmidrule(lr){14-16}
& fea. & HV & IGD$^{+}$
& fea. & HV & IGD$^{+}$
& fea. & HV & IGD$^{+}$
& fea. & HV & IGD$^{+}$
& fea. & HV & IGD$^{+}$ & \\
\midrule
\multicolumn{17}{l}{\textit{Classical Optimizers (run to convergence)}} \\
Weighted Sum
& 100\% & 0.988 & 0.009
& 100\% & 0.985 & 0.011
& 100\% & 0.990 & 0.007
& 100\% & 0.986 & 0.010
& 100\% & 0.989 & 0.008 & 5.2s \\
NBI
& 100\% & 0.991 & 0.006
& 100\% & 0.988 & 0.008
& 100\% & 0.993 & 0.005
& 100\% & 0.989 & 0.007
& 100\% & 0.992 & 0.006 & 8.8s \\
NC
& 100\% & 0.990 & 0.007
& 100\% & 0.986 & 0.010
& 100\% & 0.992 & 0.006
& 100\% & 0.987 & 0.009
& 100\% & 0.991 & 0.007 & 9.4s \\
NSGA-II
& 100\% & 0.983 & 0.012
& 100\% & 0.978 & 0.016
& 100\% & 0.987 & 0.009
& 100\% & 0.975 & 0.018
& 100\% & 0.981 & 0.013 & 32.4s \\
MOEA/D
& 100\% & 0.986 & 0.010
& 100\% & 0.980 & 0.014
& 100\% & 0.989 & 0.008
& 100\% & 0.977 & 0.017
& 100\% & 0.983 & 0.012 & 25.3s \\
\midrule
\multicolumn{17}{l}{\textit{Ours (DIPS, varying number of MPPF passes $N$)}} \\
DIPS ($N=1$)
& 97\% & 0.982 & 0.017
& 97\% & 0.969 & 0.023
& 98\% & 0.973 & 0.022
& 99\% & 0.968 & 0.025
& 99\% & 0.953 & 0.033 & 0.16s \\
DIPS+MPPF ($N=2$)
& 100\% & 0.994 & 0.008
& 100\% & 0.972 & 0.020
& 100\% & 0.974 & 0.019
& 100\% & 0.970 & 0.022
& 100\% & 0.956 & 0.030 & 0.28s \\
DIPS+MPPF ($N=4$)
& 100\% & 0.998 & 0.004
& 100\% & 0.980 & 0.016
& 100\% & 0.982 & 0.014
& 100\% & 0.972 & 0.019
& 100\% & 0.962 & 0.026 & 0.49s \\
DIPS+MPPF ($N=8$)
& 100\% & 1.000 & 0.004
& 100\% & 0.982 & 0.014
& 100\% & 0.984 & 0.012
& 100\% & 0.975 & 0.017
& 100\% & 0.966 & 0.024 & 0.92s \\
DIPS+MPPF ($N=16$)
& 100\% & 0.999 & 0.004
& 100\% & 0.984 & 0.012
& 100\% & 0.986 & 0.010
& 100\% & 0.977 & 0.015
& 100\% & 0.969 & 0.022 & 1.79s \\
\bottomrule
\end{tabular}%
}
\end{table}

Once each method is allowed to reach its own convergence regime, all baselines—classical optimizers under the hyperparameters above and DIPS+MPPF for sufficiently large $N$—produce frontiers of essentially equivalent quality, since these settings are deliberately chosen to push every method to its optimal regime. The decisive difference lies in cost: even at $N=16$, DIPS+MPPF completes within roughly $1.8$ seconds per instance on every family, whereas the cheapest classical baseline (Weighted Sum) already takes over $5$ seconds, the most accurate ones (NBI, NC) take $8$--$10$ seconds, and the population-based methods (NSGA-II, MOEA/D) are slower still. We emphasize that this is not a strict like-for-like comparison: classical methods consume the exact analytic objective and constraint matrices, whereas DIPS sees only a tokenized text description. The table should therefore be read as evidence that an LLM solver can recover comparable frontier quality much faster, rather than as a direct claim of solver superiority.

\paragraph{Significance analysis.}
To verify that the metric values reported above are not driven by outliers in the test set, we compute $95\%$ bootstrap confidence intervals for feasibility, HV ratio, and IGD$^{+}$ at each setting of $N$, by resampling test instances with replacement ($1{,}000$ resamples). The resulting intervals are reported in Table~\ref{tab:significance_ci}.

\begin{table}[h]
\centering
\caption{$95\%$ bootstrap confidence intervals for feasibility ($\%$), HV ratio, and IGD$^{+}$ across the five problem families at each value of $N \in \{1, 2, 4, 8, 16\}$. Intervals are computed by resampling $1{,}000$ times with replacement over the test set.}
\label{tab:significance_ci}
\scriptsize
\setlength{\tabcolsep}{2pt}
\renewcommand{\arraystretch}{0.95}
\resizebox{\textwidth}{!}{%
\begin{tabular}{l ccc ccc ccc ccc ccc}
\toprule
$N$ & \multicolumn{3}{c}{SBQP} & \multicolumn{3}{c}{BOQP} & \multicolumn{3}{c}{Ridge} & \multicolumn{3}{c}{Huber} & \multicolumn{3}{c}{Softplus} \\
\cmidrule(lr){2-4} \cmidrule(lr){5-7} \cmidrule(lr){8-10} \cmidrule(lr){11-13} \cmidrule(lr){14-16}
& fea. & HV & IGD$^{+}$ & fea. & HV & IGD$^{+}$ & fea. & HV & IGD$^{+}$ & fea. & HV & IGD$^{+}$ & fea. & HV & IGD$^{+}$ \\
\midrule
1  & [95.7, 97.9]\%   & [0.974, 0.989] & [0.013, 0.022] & [95.4, 97.5]\%   & [0.962, 0.975] & [0.020, 0.026] & [96.9, 98.8]\%   & [0.964, 0.979] & [0.019, 0.025] & [98.6, 99.6]\%   & [0.965, 0.971] & [0.023, 0.026] & [98.5, 99.5]\%   & [0.949, 0.957] & [0.031, 0.035] \\
2  & [99.5, 100.0]\%  & [0.988, 0.998] & [0.006, 0.010] & [100.0, 100.0]\% & [0.965, 0.978] & [0.018, 0.023] & [100.0, 100.0]\% & [0.968, 0.980] & [0.016, 0.022] & [100.0, 100.0]\% & [0.966, 0.973] & [0.021, 0.024] & [100.0, 100.0]\% & [0.951, 0.960] & [0.028, 0.033] \\
4  & [99.5, 100.0]\%  & [0.994, 1.000] & [0.004, 0.005] & [100.0, 100.0]\% & [0.975, 0.983] & [0.014, 0.018] & [100.0, 100.0]\% & [0.978, 0.985] & [0.013, 0.016] & [100.0, 100.0]\% & [0.967, 0.975] & [0.018, 0.022] & [100.0, 100.0]\% & [0.959, 0.965] & [0.024, 0.027] \\
8  & [100.0, 100.0]\% & [0.999, 1.000] & [0.004, 0.004] & [100.0, 100.0]\% & [0.978, 0.984] & [0.013, 0.016] & [100.0, 100.0]\% & [0.980, 0.986] & [0.011, 0.014] & [100.0, 100.0]\% & [0.972, 0.977] & [0.016, 0.018] & [100.0, 100.0]\% & [0.963, 0.968] & [0.022, 0.025] \\
16 & [100.0, 100.0]\% & [0.999, 1.000] & [0.003, 0.004] & [100.0, 100.0]\% & [0.983, 0.985] & [0.011, 0.013] & [100.0, 100.0]\% & [0.984, 0.987] & [0.010, 0.011] & [100.0, 100.0]\% & [0.976, 0.979] & [0.015, 0.016] & [100.0, 100.0]\% & [0.966, 0.971] & [0.020, 0.023] \\
\bottomrule
\end{tabular}%
}
\end{table}

The intervals are tight on every metric and every family, and they shrink monotonically as $N$ grows: by $N=8$ the feasibility interval reaches $[100.0, 100.0]\%$ on all five families, and the HV intervals span ranges of width $0.005$--$0.008$. This indicates that the reported gains from MPPF are statistically reliable rather than artifacts of test-instance variability, and that the convergence pattern toward classical-optimizer quality observed in Table~\ref{tab:fullbudget_classical} is a stable property of the model rather than a consequence of fortunate sampling.

\subsection{Anchor-Solution Latent Exploration}
\label{app:anchor_ablation}

The DIPS prompt includes two single-objective anchor solutions, one approximately minimizing $f_1$ and one approximately minimizing $f_2$, computed cheaply at instance generation time. These anchors provide compact cues about the two ends of the trade-off surface and are intended to bias autoregressive generation toward the relevant region of the solution space, analogously to how nearest-neighbor heuristics bias sequence-to-sequence routing solvers~\citep{jiang2025llm_end2end_co}. We attempted to remove the anchors from the input and observed that training collapses, with the model failing to produce structurally valid Pareto-front sequences. The anchors do not constrain the output—the model is still free to generate any $K$ vectors—but they appear to reduce the search burden by supplying implicit endpoints that serve as reliable reference solutions. This echoes the ``heuristic features in the input prompt'' observation in the discrete LLM-CO setting~\citep{jiang2025llm_end2end_co}: lightweight, problem-aware text features can substantially improve learning efficiency without compromising the language-driven interface.

\subsection{Training Stability}
\label{app:training_stability}

To complement the quantitative ablation in Section~\ref{sec:ablation}, we further assess each variant along three qualitative axes: \textbf{Parse Success}, indicating whether the model produces solution sets in the prescribed format; \textbf{Training Stability}, indicating whether optimization proceeds smoothly throughout training; and \textbf{Final Quality}, indicating the converged performance. We label training as Stable when optimization is smooth and outputs remain consistently valid, Unstable when the loss is highly noisy and degenerate generations occur with non-trivial probability, and Collapse when optimization exhibits severe degradation. The results are summarized in Table~\ref{tab:ablation_stability}.

\begin{table}[h]
\centering
\caption{Qualitative training-stability comparison of the ablated variants.}
\label{tab:ablation_stability}
\small
\setlength{\tabcolsep}{5pt}
\renewcommand{\arraystretch}{1.0}
\begin{tabular}{lccc}
\toprule
Method & Parse Success & Training Stability & Final Quality \\
\midrule
Raw Decimal + CE                  & Low    & Unstable            & Low \\
Discretization + Random Init + CE & Medium & Unstable / Collapse & Low \\
Discretization + NGTI + CE        & High   & Stable              & Medium \\
Full DIPS                         & High   & Stable              & High \\
\bottomrule
\end{tabular}
\end{table}

The raw-decimal baseline performs poorly because the model must simultaneously learn the output structure and high-precision continuous values from irregular decimal strings. Discretization substantially simplifies parsing, but random initialization renders the newly introduced numerical tokens difficult to optimize reliably. NGTI mitigates this adaptation issue and restores stable training, yet cross-entropy supervision alone still leaves a clear gap in frontier quality. Only the full DIPS pipeline jointly addresses the three principal bottlenecks: representation difficulty, new-token adaptation, and the numerical insensitivity of token-level supervision.

To further illustrate these training dynamics, Figure~\ref{fig:loss_curves} plots the training loss curves with and without DIPS on three representative families. Across all three, the DIPS variant converges to a markedly lower loss floor, whereas the baseline without DIPS plateaus early at a substantially higher level. This suggests that cross-entropy training alone captures only the coarse geometric structure of the solutions and fails to drive the outputs toward accurate numerical values, consistent with the qualitative assessments in Table~\ref{tab:ablation_stability}.

\begin{figure}[h]
\centering
\begin{minipage}[b]{0.32\textwidth}
    \centering
    \includegraphics[width=\textwidth]{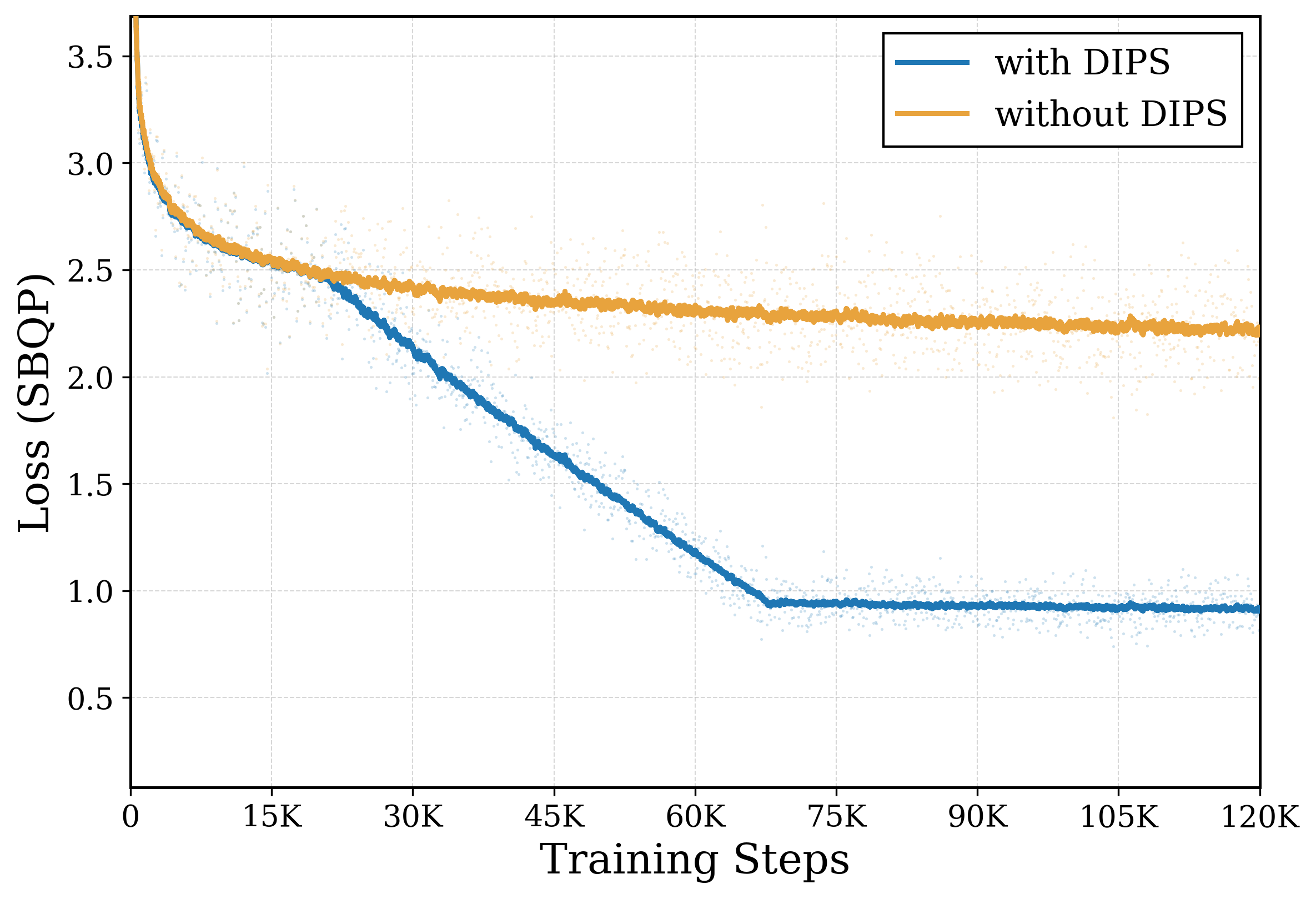}
    \\ \small (a) SBQP
\end{minipage}
\hfill
\begin{minipage}[b]{0.32\textwidth}
    \centering
    \includegraphics[width=\textwidth]{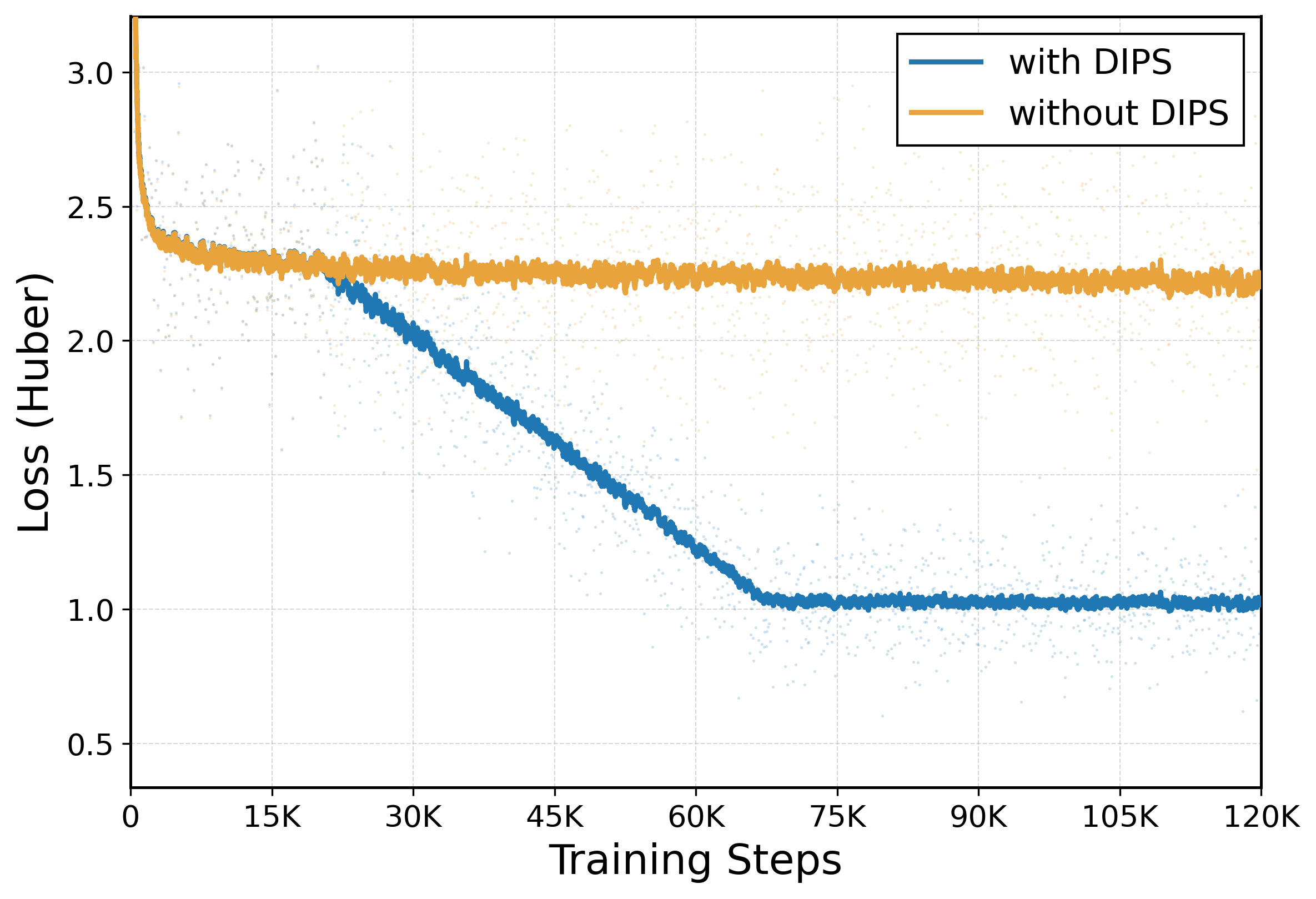}
    \\ \small (b) Huber
\end{minipage}
\hfill
\begin{minipage}[b]{0.32\textwidth}
    \centering
    \includegraphics[width=\textwidth]{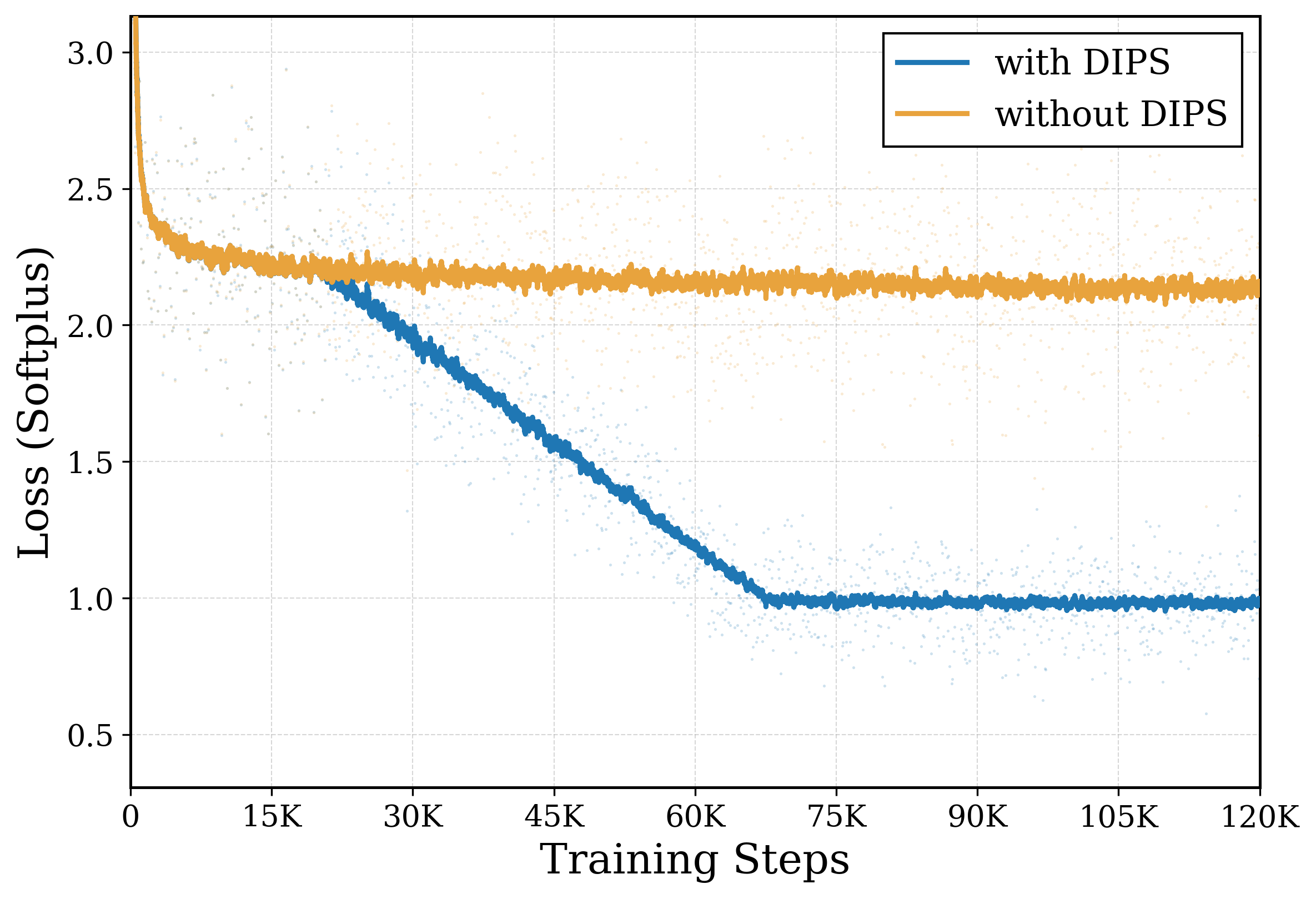}
    \\ \small (c) Softplus
\end{minipage}
\caption{Training loss curves with and without DIPS on three problem families: (a) SBQP; (b) Huber; (c) Softplus.}
\label{fig:loss_curves}
\end{figure}

\subsection{Per-family DIPS vs.\ unified DIPS on the five bi-objective continuous optimization families}
\label{app:unified_solver}

To demonstrate the potential of our approach toward a unified multi-objective optimization solver, we fine-tune a single Qwen2.5-7B model with DIPS on a combined training corpus aggregated from all five bi-objective continuous optimization problem families studied in this work. As shown in Table~\ref{tab:unified_detailed}, the unified model attains strong results in feasibility, HV, and IGD$^{+}$ across every family, performing on par with the per-family experts. These results indicate that training across heterogeneous problem types does not noticeably compromise performance. With further scaling in model size, task diversity, and prompt design, large language models hold considerable promise as foundation models for multi-objective optimization.

\begin{table}[h]
\centering
\caption{Per-family DIPS vs.\ unified DIPS on the five bi-objective continuous optimization families (single-pass inference, $N=1$).}
\label{tab:unified_detailed}
\small
\setlength{\tabcolsep}{4pt}
\begin{tabular}{lccc ccc}
\toprule
Family & \multicolumn{3}{c}{Per-family DIPS} & \multicolumn{3}{c}{Unified DIPS} \\
\cmidrule(lr){2-4} \cmidrule(lr){5-7}
& fea. & HV & IGD$^{+}$ & fea. & HV & IGD$^{+}$ \\
\midrule
SBQP     & 97\% & 0.982 & 0.017 & 100\% & 0.987 & 0.015 \\
BOQP     & 97\% & 0.969 & 0.023 & 100\% & 0.981 & 0.019 \\
Ridge    & 98\% & 0.973 & 0.022 & 100\% & 0.971 & 0.023 \\
Huber    & 99\% & 0.968 & 0.025 & 100\% & 0.971 & 0.024 \\
Softplus & 99\% & 0.953 & 0.033 & 100\% & 0.964 & 0.027 \\
\bottomrule
\end{tabular}
\end{table}

\subsection{Backbone Versatility}
\label{app:backbone_versatility}

This appendix provides the detailed per-family results for the backbone versatility study referenced in Section~\ref{sec:versatility}. The DIPS pipeline is instantiated on Llama-3.1-8B and Gemma-2-9B in addition to the default Qwen2.5-7B-Instruct backbone, with all other components (discretization, NGTI, TPCO, MPPF, hyperparameters) held fixed. We report each backbone under two settings: DIPS, which performs a single autoregressive decoding pass per instance, and DIPS+MPPF with $N=4$ inference passes.

\begin{table}[h]
\centering
\caption{Backbone versatility study. We report per-family fea./HV/IGD$^{+}$ for the three backbones under both DIPS (single pass) and DIPS+MPPF ($N=4$).}
\label{tab:backbone_detailed}
\scriptsize
\setlength{\tabcolsep}{2pt}
\resizebox{\textwidth}{!}{%
\begin{tabular}{lccc ccc ccc ccc ccc}
\toprule
Backbone & \multicolumn{3}{c}{SBQP} & \multicolumn{3}{c}{BOQP} & \multicolumn{3}{c}{Ridge} & \multicolumn{3}{c}{Huber} & \multicolumn{3}{c}{Softplus} \\
\cmidrule(lr){2-4} \cmidrule(lr){5-7} \cmidrule(lr){8-10} \cmidrule(lr){11-13} \cmidrule(lr){14-16}
& fea. & HV & IGD$^{+}$ & fea. & HV & IGD$^{+}$ & fea. & HV & IGD$^{+}$ & fea. & HV & IGD$^{+}$ & fea. & HV & IGD$^{+}$ \\
\midrule
\multicolumn{16}{l}{\textit{DIPS (single pass)}} \\
Qwen2.5-7B   & 97\% & 0.982 & 0.017 & 97\% & 0.969 & 0.023 & 98\% & 0.973 & 0.022 & 99\% & 0.968 & 0.025 & 99\% & 0.953 & 0.033 \\
Llama-3.1-8B & 94\% & 0.951 & 0.039 & 93\% & 0.937 & 0.046 & 95\% & 0.943 & 0.041 & 96\% & 0.934 & 0.048 & 96\% & 0.921 & 0.054 \\
Gemma-2-9B   & 96\% & 0.967 & 0.027 & 95\% & 0.952 & 0.034 & 96\% & 0.958 & 0.031 & 97\% & 0.951 & 0.036 & 97\% & 0.937 & 0.043 \\
\midrule
\multicolumn{16}{l}{\textit{DIPS+MPPF ($N=4$)}} \\
Qwen2.5-7B   & 100\% & 0.998 & 0.004 & 100\% & 0.980 & 0.016 & 100\% & 0.982 & 0.014 & 100\% & 0.972 & 0.019 & 100\% & 0.962 & 0.026 \\
Llama-3.1-8B & 99\%  & 0.972 & 0.018 & 98\%  & 0.954 & 0.029 & 99\%  & 0.957 & 0.027 & 99\%  & 0.948 & 0.033 & 98\%  & 0.937 & 0.040 \\
Gemma-2-9B   & 100\% & 0.987 & 0.011 & 99\%  & 0.967 & 0.022 & 99\%  & 0.971 & 0.020 & 100\% & 0.961 & 0.025 & 99\%  & 0.951 & 0.032 \\
\bottomrule
\end{tabular}%
}
\end{table}

The pattern is consistent across backbones: DIPS+MPPF uniformly outperforms single-pass DIPS, and Qwen2.5-7B retains a small lead over Llama-3.1-8B and Gemma-2-9B that we attribute to differences in numerical pretraining rather than to the DIPS pipeline itself. Across all three backbones, DIPS+MPPF achieves $\geq 98\%$ feasibility on every family, demonstrating that the proposed representation, initialization, and curriculum are not specific to a single model family.

\subsection{Per-Family Fine-Grained Results}
\label{app:per_family_results}

To complement the aggregated comparison in Table~\ref{tab:main_results}, we report a fine-grained analysis of feasibility, HV ratio, and IGD$^{+}$ across the full range of decision-variable dimensions $n \in [10, 20]$, averaged over 100 test instances per dimension. Figure~\ref{fig:per_family_dips} shows the curves of single-pass DIPS, while Figure~\ref{fig:per_family_dips_mppf} shows those of DIPS+MPPF with $N=4$ inference passes.

\begin{figure}[h]
\centering
\begin{minipage}[b]{0.32\textwidth}
    \centering
    \includegraphics[width=\textwidth]{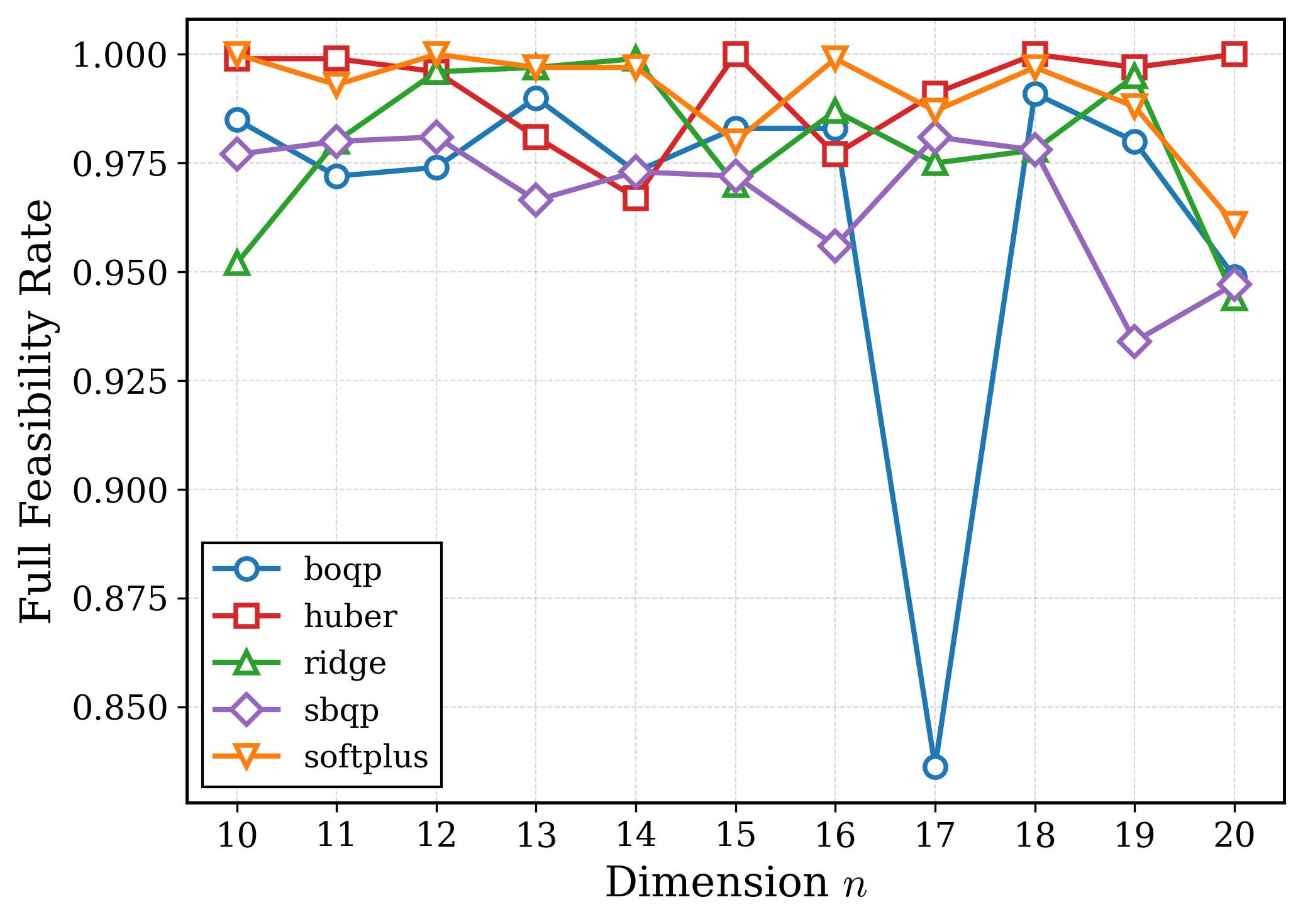}
    \\ \small (a) Full Feasibility Rate
\end{minipage}
\hfill
\begin{minipage}[b]{0.32\textwidth}
    \centering
    \includegraphics[width=\textwidth]{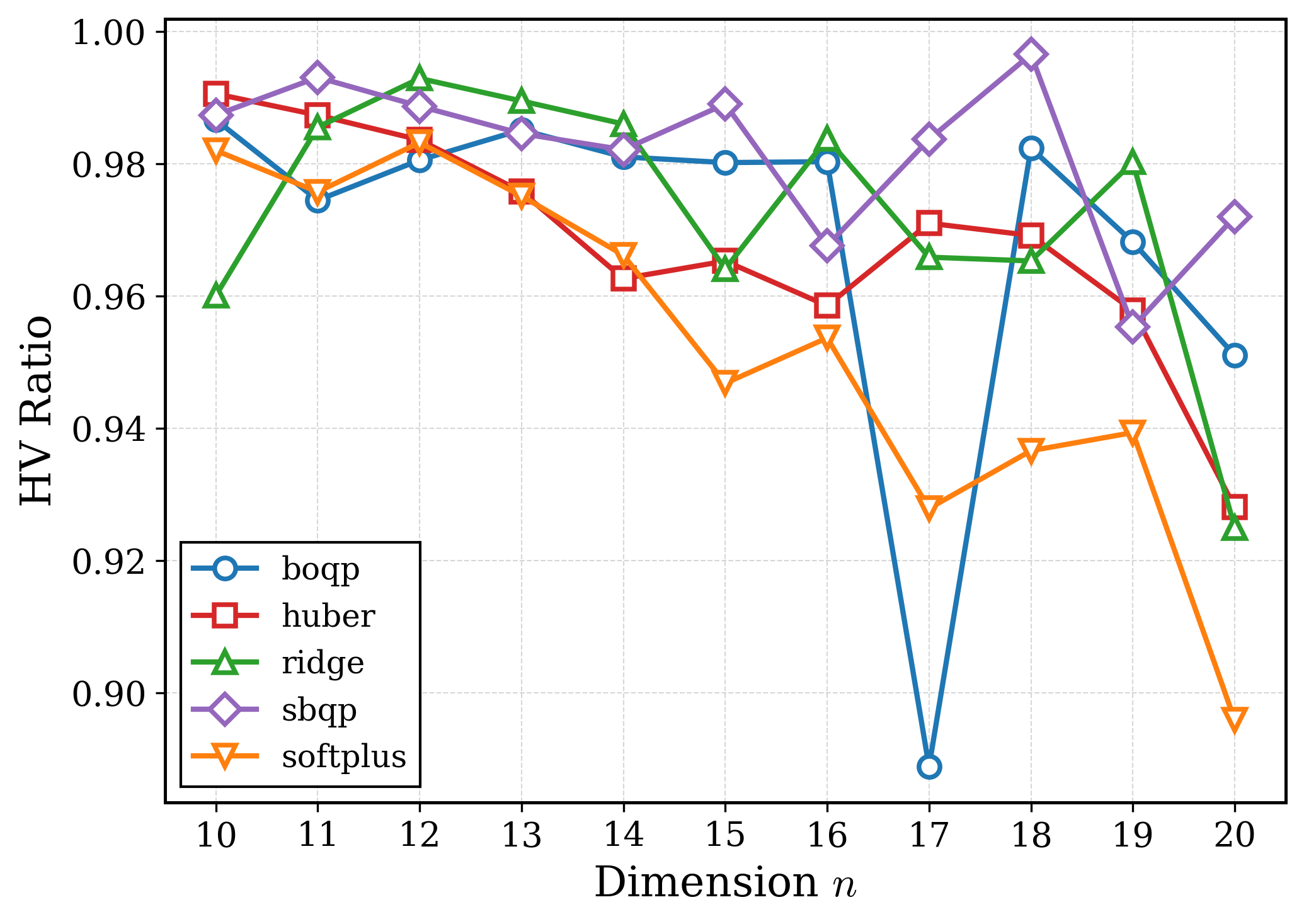}
    \\ \small (b) HV Ratio
\end{minipage}
\hfill
\begin{minipage}[b]{0.32\textwidth}
    \centering
    \includegraphics[width=\textwidth]{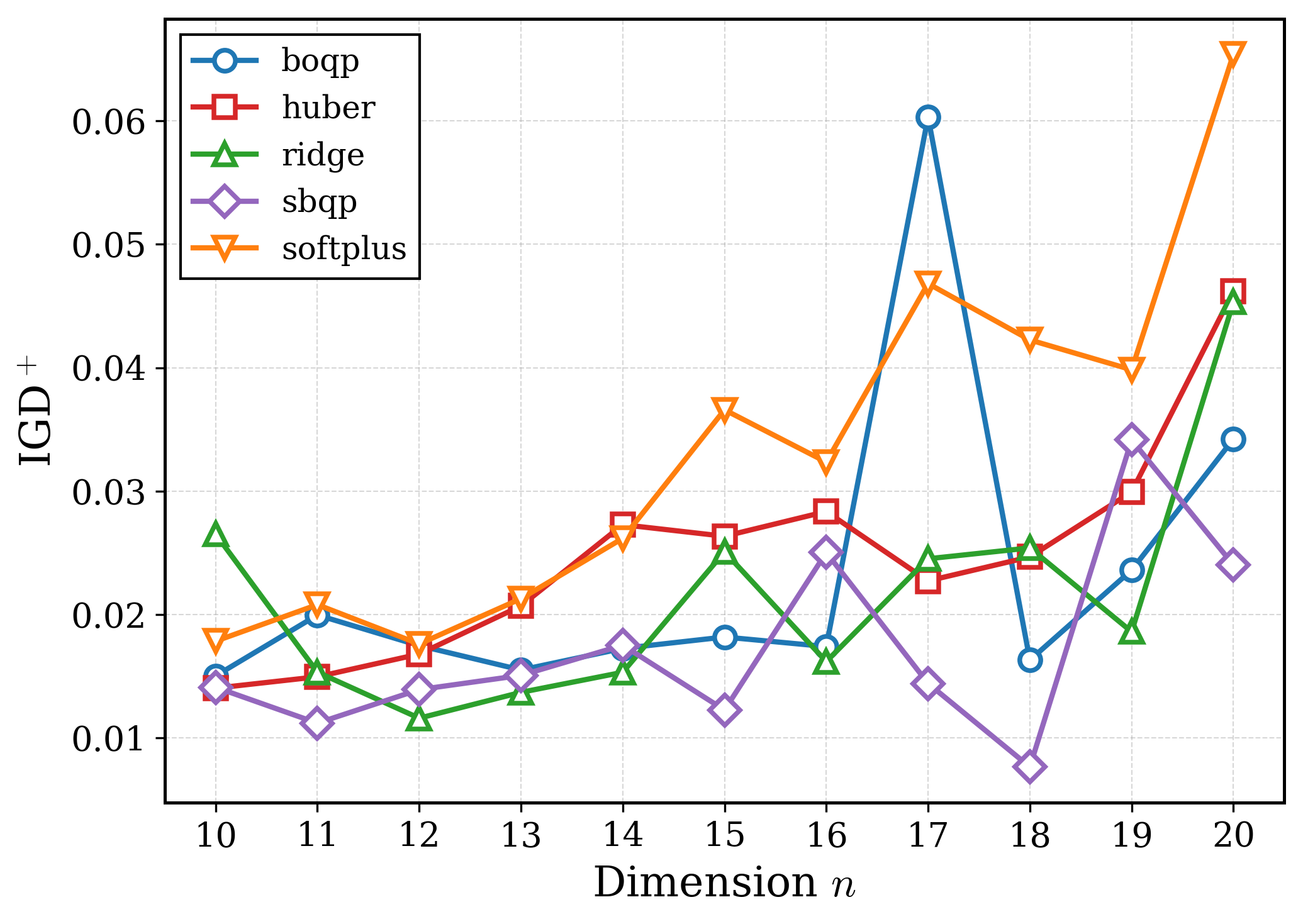}
    \\ \small (c) IGD$^{+}$
\end{minipage}
\caption{Fine-grained per-dimension results of single-pass DIPS across $n \in [10, 20]$: (a) Full Feasibility Rate; (b) HV Ratio; (c) IGD$^{+}$.}
\label{fig:per_family_dips}
\end{figure}

\begin{figure}[h]
\centering
\begin{minipage}[b]{0.32\textwidth}
    \centering
    \includegraphics[width=\textwidth]{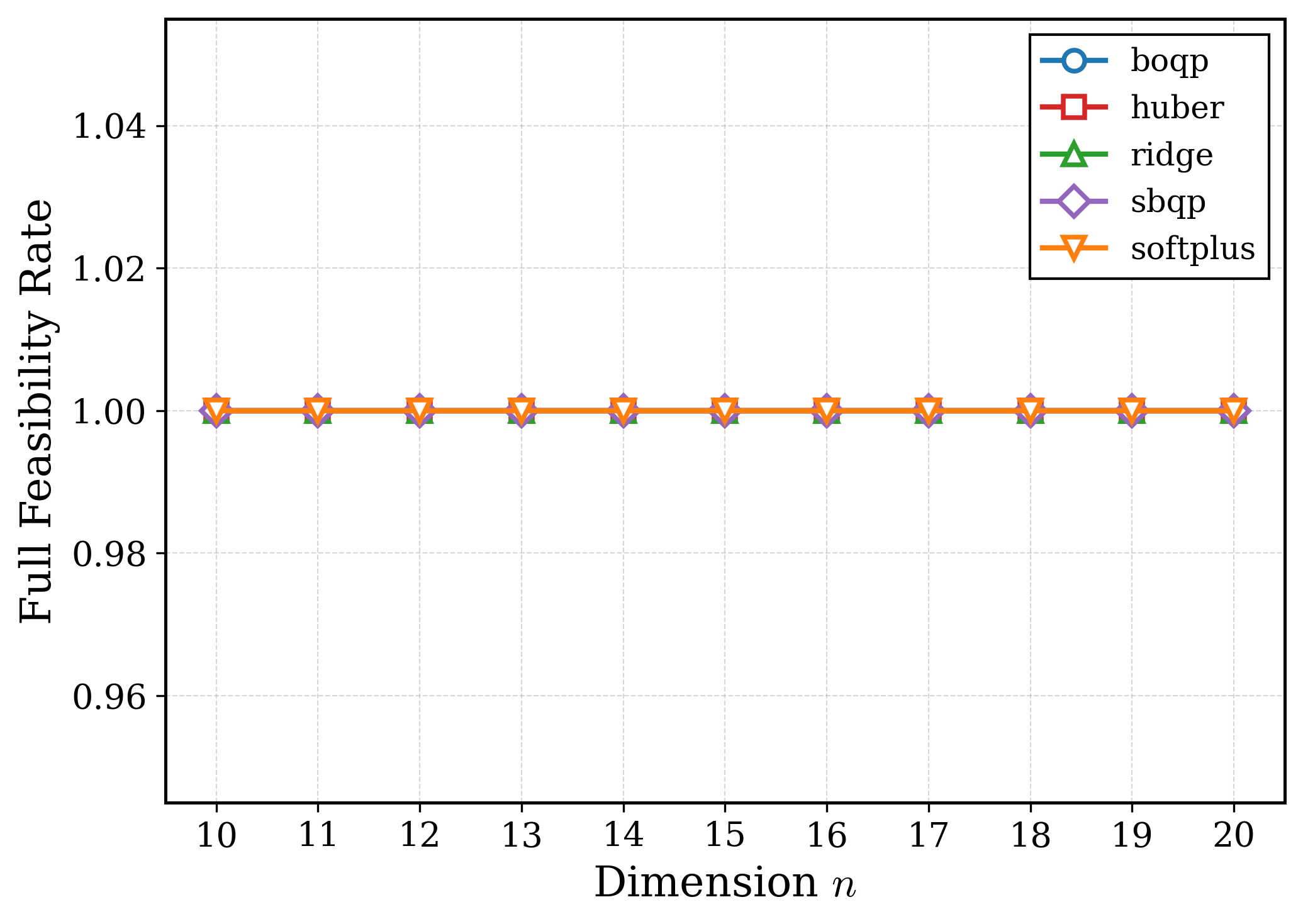}
    \\ \small (a) Full Feasibility Rate
\end{minipage}
\hfill
\begin{minipage}[b]{0.32\textwidth}
    \centering
    \includegraphics[width=\textwidth]{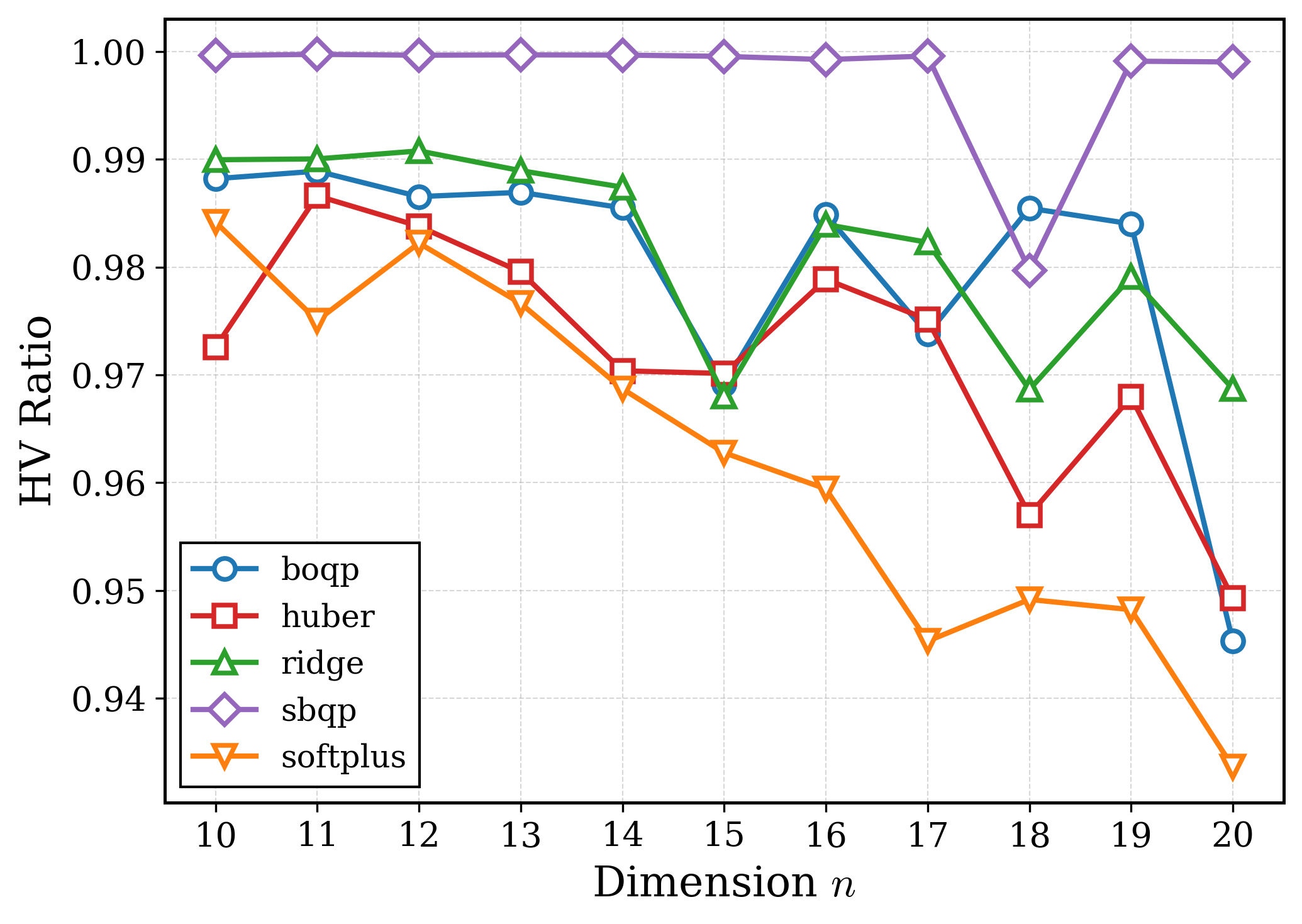}
    \\ \small (b) HV Ratio
\end{minipage}
\hfill
\begin{minipage}[b]{0.32\textwidth}
    \centering
    \includegraphics[width=\textwidth]{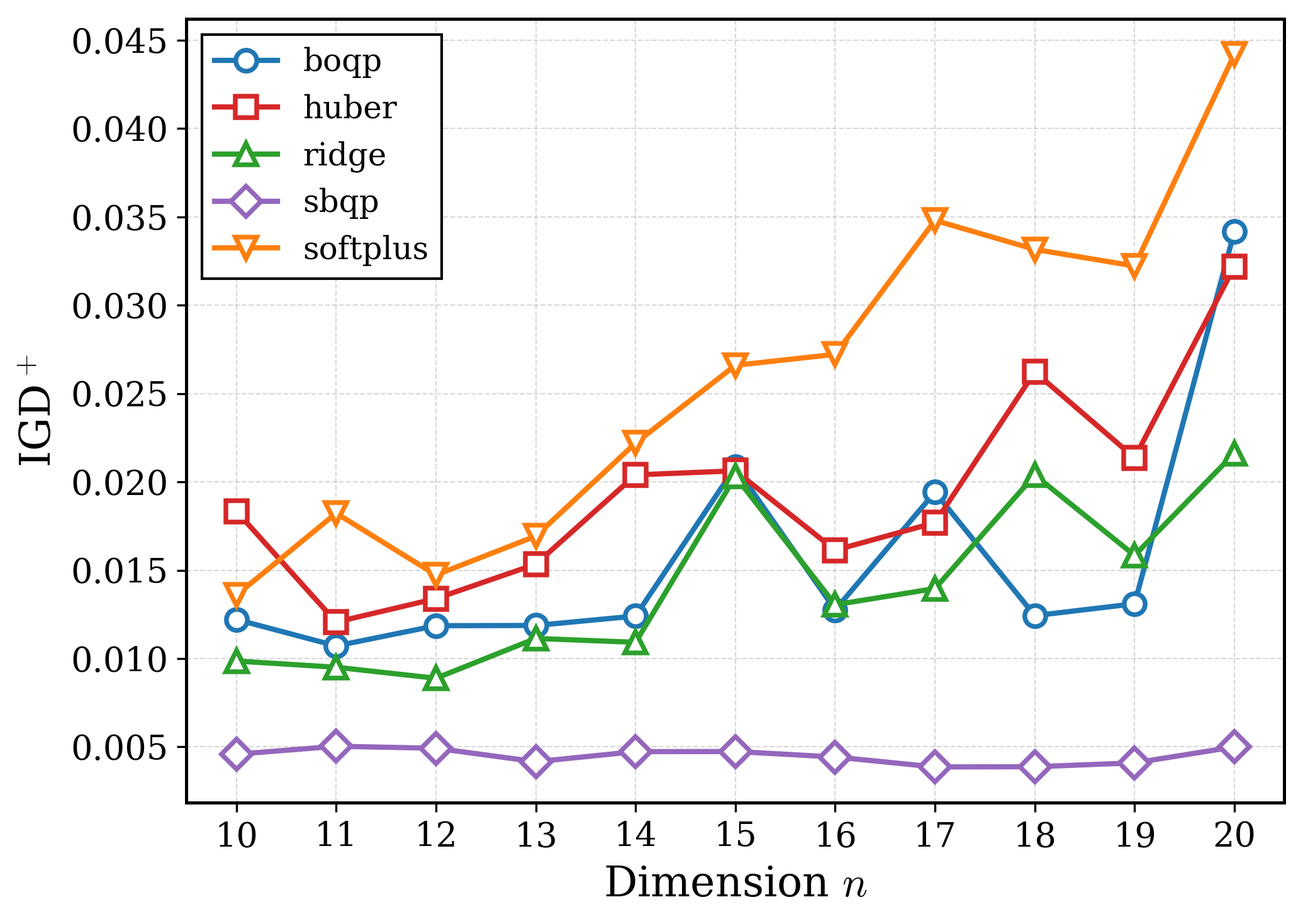}
    \\ \small (c) IGD$^{+}$
\end{minipage}
\caption{Fine-grained per-dimension results of DIPS+MPPF ($N=4$) across $n \in [10, 20]$: (a) Full Feasibility Rate; (b) HV Ratio; (c) IGD$^{+}$.}
\label{fig:per_family_dips_mppf}
\end{figure}

For single-pass DIPS, performance exhibits a mild but consistent degradation as $n$ grows: feasibility and HV ratio gradually decline while IGD$^{+}$ trends upward, reflecting the intrinsic difficulty of higher-dimensional Pareto-front geometry, where the feasible region becomes increasingly anisotropic and the trade-off surface harder to discretize at fixed token budget. This trend is markedly attenuated under DIPS+MPPF in Figure~\ref{fig:per_family_dips_mppf}: feasibility is essentially saturated at $1.0$ across all dimensions, the HV decline flattens, and IGD$^{+}$ remains tightly bounded. By aggregating multiple stochastic decoding passes and reconciling them through non-dominated fusion, MPPF compensates for the higher decoding variance encountered in larger dimensions and recovers a denser, more uniform Pareto front, demonstrating that the framework scales gracefully with problem size.

\subsection{Out-of-Distribution Generalization to Unseen Dimensions}
\label{app:ood_dimensions}

The training distribution covers dimensions $n\in[10,20]$ (Appendix~\ref{app:instance_generation}). To probe behavior strictly outside this range, we evaluate DIPS+MPPF ($N=4$) on freshly generated instances at $n\in\{21,22,23\}$ with $100$ instances per dimension and family. All components are kept identical to the in-distribution setup; no further fine-tuning or prompt adaptation is performed. Results are reported in Table~\ref{tab:ood_dimensions}.

\begin{table}[h]
\centering
\caption{Out-of-distribution evaluation of DIPS+MPPF ($N=4$) on unseen dimensions $n\in\{21,22,23\}$ across all five function families.}
\label{tab:ood_dimensions}
\small
\setlength{\tabcolsep}{6pt}
\renewcommand{\arraystretch}{1.0}
\begin{tabular}{ll ccc}
\toprule
Family & $n$ & fea. & HV & IGD$^{+}$ \\
\midrule
SBQP     & 21 & 100\% & 0.928 & 0.052 \\
         & 22 & 89\%  & 0.847 & 0.115 \\
         & 23 & 71\%  & 0.713 & 0.198 \\
\midrule
BOQP     & 21 & 100\% & 0.901 & 0.067 \\
         & 22 & 82\%  & 0.792 & 0.143 \\
         & 23 & 67\%  & 0.643 & 0.231 \\
\midrule
Ridge    & 21 & 100\% & 0.886 & 0.078 \\
         & 22 & 74\%  & 0.738 & 0.171 \\
         & 23 & 53\%  & 0.572 & 0.281 \\
\midrule
Huber    & 21 & 62\%  & 0.612 & 0.234 \\
         & 22 & 49\%  & 0.443 & 0.347 \\
         & 23 & 25\%  & 0.236 & 0.498 \\
\midrule
Softplus & 21 & 58\%  & 0.567 & 0.258 \\
         & 22 & 21\%  & 0.262 & 0.456 \\
         & 23 & 9\%   & 0.103 & 0.612 \\
\bottomrule
\end{tabular}
\end{table}

As shown in Table~\ref{tab:ood_dimensions}, DIPS+MPPF exhibits partial out-of-distribution generalization to unseen dimensions. On the smoother quadratic and ridge-type families, the model continues to produce feasible Pareto-front approximations at $n=21$ and retains non-trivial performance as the dimension increases, although both feasibility and frontier quality degrade for $n=22,23$. In contrast, the Huber and Softplus families show a much sharper decline, with feasibility and hypervolume dropping substantially beyond the training range. These results suggest that DIPS learns reusable geometric structure within the training regime, but its extrapolation ability remains limited, especially for harder nonlinear objectives. We therefore view unseen-dimension generalization as promising but not yet robust, and identify scaling to higher-dimensional and more nonlinear problem families as an important direction for future work.

\end{document}